\newtheorem{remark}{Remark}
\newtheorem{theorem}{Theorem}
\newtheorem{lemma}{Lemma}
\newtheorem{corollary}{Corollary}
\DeclareMathOperator*{\argmin}{arg\,min}
\newcommand{\R}{\mathbb{R}}
\renewcommand{\exp}[1]{e^{#1}}
\newcommand{\dotprod}[2]{\langle #1,#2 \rangle}
\newcommand{\norm}[1]{{\|#1\|}}
\newcommand{\E}{\mathbb{E}}
\newcommand{\Eh}{\widehat{\E}}
\renewcommand{\P}{\mathbb{P}}
\newcommand{\X}{\mathcal{X}}
\newcommand{\HH}{\mathcal{H  }}
\newcommand{\Y}{\mathcal{Y}}
\newcommand{\Z}{\mathcal{Z}}
\newcommand{\D}{\mathcal{D}}
\newcommand{\F}{\mathcal{F}}
\newcommand{\W}{\mathcal{W}}
\newcommand{\Rademacher}{\mathfrak{R}}
\newcommand{\zp}{{\widetilde{z}}}
\newcommand{\vv}{\bm{v}}
\newcommand{\uu}{\bm{u}}
\newcommand{\w}{\bm{w}}
\newcommand{\x}{\bm{x}}
\newcommand{\si}[1]{^{(#1)}}
\newcommand{\hns}{\hspace{-0.025in}}
\newcommand{\eps}{\varepsilon}
\title{On the Statistical Efficiency of Compositional Nonparametric Prediction}
\author[1]{Yixi Xu}
\author[2]{Jean Honorio}
\author[1]{Xiao Wang}
\affil[ ]{xu573@purdue.edu, jhonorio@purdue.edu, wangxiao@purdue.edu }
\affil[1]{Department of Statistics, Purdue University, West Lafayette, IN 47907, USA}
\affil[2]{Department of Computer Science, Purdue University, West Lafayette, IN 47907, USA}
\date{}
\begin{document}
\maketitle
\begin{abstract}
 In this paper, we propose a compositional nonparametric method in which a model is expressed as a labeled binary tree of $2k+1$ nodes, where each node is either a summation, a multiplication, or the application of one of the $q$ basis functions to one of the $p$ covariates. We show that in order to recover a labeled binary tree from a given dataset, the sufficient number of samples is $O(k\log(pq)+\log(k!))$, and the necessary number of samples is  $\Omega(k\log (pq)-\log(k!))$. We further propose a greedy algorithm for regression in order to validate our theoretical findings through synthetic experiments.
\end{abstract}
\section{Introduction}
Nonparametric methods, such as spline-based methods and kernel-based methods, have been widely used in the past 20 years. Most existing methods make assumptions regarding the structure of the model in terms of interactions. For instance, the work of \cite{Ravikumar07} assumes an additive structure of the predictor function, while in \cite{cortes2009learning} the kernel family is defined as polynomial combinations of base kernels of a fixed degree. On the one hand, there is usually insufficient evidence from the data to support the assumption of a specific structure. On the other hand, inclusion of all interactions especially of high order terms would be burdensome for computing especially when the data is high dimensional. A commonly used strategy is to only include low order interactions into the model \cite{cortes2009learning}. However, this would still be a restrictive assumption.

Our goal is to discover the complex structure of the \emph{predictor} function in a concise manner. In contrast, existing methods focus on the discovery of the structure of \emph{kernels} \cite{cortes2009learning,structure}. As an illustrative example for predictor functions, consider the work of Schmidt et al. \cite{schmidt2009distilling}, which discovered physical laws from experimental data, and provided concise analytical expressions that are amenable to human interpretation. 

We build our model by compositionally adding or multiplying basis functions applied to specific dimensions of the covariate. This model is structurally equivalent to a labeled binary tree. The sum-product structure has demonstrated its versatility for several problems. Examples include sum-product networks for computation of partition functions and marginals of high-dimensional distributions \cite{poon2011sum} and structure discovery in nonparametric regression for automatic selection of the kernel family \cite{structure}. 

Our model is a generalization of several popular methods. For illustration, consider the following examples: 
\begin{itemize}
\item Tensor product spline surfaces\cite{spline}: Assume there are two covariates $\x=(x_1,x_2)$,  and define $g(\x) =\sum\limits_{i=1}^{q}\sum\limits_{j=1}^{q}\beta_{ij}\phi_i(x_1)\phi_j(x_2)$, given the basis functions $\phi_1,\dots,\phi_q: \R \to \R$. For simplicity, assume $q=2$, then Figure \ref{treesplinetensor} is one visualization of $g$, where $\beta_{11}=w_1w_3,\beta_{12}=w_1w_4,\beta_{21}=w_2w_3,\beta_{22}=w_2w_4$.
\item Sparse additive models \cite{Ravikumar07}: Assume that $g(\x)$ has an additive decomposition, where $\bm{x}=(x_1,\dots,x_p)$. Define $g(\bm{x})=\sum\limits_{j=1}^p\phi_{a_j}(x_j)$, where $a_1,\dots,a_p \in \{1,\dots,q\}$ and such that $\sum\limits_{j=1}^p\mathbb{I}(\phi_{a_j}\neq 0)\le s$ for some integer $s \ll p$.  

\item Tensor decomposition: Given a set of $q$ functions $\phi_1,\dots,\phi_q$ and a tensor $y_{ijk}$ for $i,j,k=1,\dots,p$. The problem is to find the indices $a_r,b_r,c_r \in \{ 1,\dots, q\}$ for $r=1,\dots,R$, that minimize:
$\sum\limits_{i=1}^p\sum\limits_{j=1}^p\sum\limits_{k=1}^p \left( \sum\limits_{r=1}^R { w_r \phi_{a_r}(i) \phi_{b_r}(j) \phi_{c_r}(k) } - y_{ijk} \right) ^2$.
Note that $\sum\limits_{r=1}^R { w_r \phi_{a_r}(i) \phi_{b_r}(j) \phi_{c_r}(k) }$ can be written as a fixed weighted labeled binary tree. Figure \ref{treetensor} illustrates the case when $R=2$.
\end{itemize}
Our contribution is as follows. First, we propose a general compositional sum-product nonparametric method, in which a model is expressed as a weighted labeled binary tree. Second, we provide a generalization bound that holds for any data distribution and any weighted labeled binary tree. We show that $O(k \log (pq)+\log k!)$ samples are sufficient, by using Rademacher-complexity arguments.
Third, we further show that $\Omega(k \log (pq)-\log k!)$ samples are necessary, by using information-theoretic arguments. Thus, our sample complexity bounds are tight. Furthermore, since the sample complexity is \emph{logarithmic} in $p$ and $q$, our method is statistically suitable for high dimensions and a large number of basis functions. Finally, we propose a well-motivated greedy algorithm for regression in order to validate our theoretical findings.

For comparison with results on sparse additive models, the work of  \cite {Ravikumar07} presents an $L_1$-regularization approach. Additionally, a sample complexity of $O(q\log((p-s)q))$ was shown to be sufficient for the correct identification of the basis functions in the sparse additive model. Note that in our work, we are interested in generalization bounds for the  prediction error. The necessary number of samples for sparse additive models was analyzed in \cite{Raskutti09}, where a sample complexity of $\Omega(s\log p)$ was found for the recovery of a function that is close to the true function in $L_2$-norm. Our sample complexity guarantee of $O(k \log p)$ matches this bound.

The paper is structured as follows. In Section 2, we provide a generalization bound. Section 3 discusses the necessary number of samples. In Section 4, we propose a greedy search algorithm for regression. In Section 5, we validate our theoretical results through synthetic experiments.

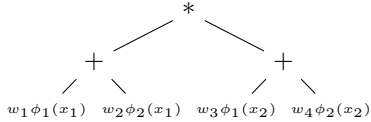
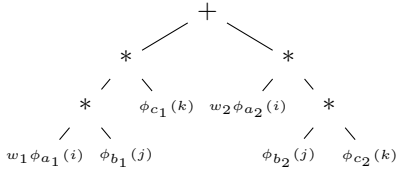
\begin{figure}
  \centering
  \subfigure[Tensor product spline surfaces.]{
     \begin{tikzpicture}
[scale=.36,
level distance=1.8cm,
  level 1/.style={sibling distance=7cm},
  level 2/.style={sibling distance=3.5cm},
  empty/.style = {circle, draw, fill = white, minimum size = .2cm}]
  \node {*}
    child {node {+}
      child {node {\tiny  $w_1\phi_1(x_1)$}}
      child {node {\tiny  $w_2\phi_2(x_1)$}}
    }
    child {node {+}
    child {node {\tiny  $w_3\phi_1(x_2)$}}
      child {node {\tiny  $w_4\phi_2(x_2)$}}
    };
\end{tikzpicture}
\label{treesplinetensor}}
      \hspace{0.01\linewidth}
  \subfigure[Tensor decomposition.]{
\begin{tikzpicture}
[scale=.36,
level distance=1.8cm,
  level 1/.style={sibling distance=6cm},
  level 2/.style={sibling distance=3cm},
  empty/.style = {circle, draw, fill = white, minimum size = .2cm}]
  \node {+}
    child {node {*}
      child {node {*}
      child {node {\tiny  $w_1\phi _{a_1}(i)$}}
      child {node {\tiny  $\phi_{b_1}(j)$}}}
      child {node {\tiny  $\phi_{c_1}(k)$}}
    }
    child {node {*}
    child {node {\tiny  $w_2\phi_{a_2}(i)$}}
      child {node {*}
      child {node {\tiny  $\phi_{b_2}(j)$}}
      child {node {\tiny  $\phi_{c_2}(k)$}}}
    };
\end{tikzpicture}
\label{treetensor}}
  \caption{Examples of tensor product spline surfaces and tensor decomposition.}
\end{figure}
\section{Compositional Nonparametric Trees for the General Prediction Problem}
In this section, we define the general prediction problem, and then propose a solution via a compositional nonparametric method, in which a model is defined as a weighted labeled binary tree. In this tree, each node represents a multiplication, an addition, or the application of a basis function to a particular covariate.
\paragraph{The General Prediction Problem.}
Assume that $\x_1,\dots,\x_n$ are $n$ independent random variables on $\X=\mathbb{R}^p$, $y_1,\dots,y_n$ are on $\mathcal{Y}\subseteq\mathbb{R}$. The general prediction problem is defined as
\begin{equation}
\label{general}
y_i=t(g(\x_i)+\epsilon_i),
\end{equation}
where $t: \mathbb{R} \rightarrow \mathcal{Y}$ is a fixed function related to the prediction problem, $g: \R^p \to \R$ is an unknown function, and $\epsilon_i$ is an independent noise. We provide two examples in order to illustrate how to adopt equation (\ref{general}) to different settings. For regression, we define $t(z)=z$, while for classification, we define $t(z)=sign(z)$.

\paragraph{The Labeled Binary Tree.}
\begin{figure}  
 \centering
  \subfigure[A labeled binary tree.]{
   \centering
  \begin{tikzpicture}
[scale=.36,
level distance=1.8cm,
  level 1/.style={sibling distance=6cm},
  level 2/.style={sibling distance=3cm},
  empty/.style = {circle, draw, fill = white, minimum size = .2cm}]
  \node {*}
    child {node {+}
      child {node {\tiny  $\phi_1(x_2)$}}
      child {node {\tiny  $\phi_3(x_1)$}}
    }
    child {node {+}
    child {node {\tiny  $\phi_3(x_2)$}}
      child {node {\tiny  $\phi_1(x_3)$}}
    };
\end{tikzpicture}
\label{tree1}
}
      \hspace{0.01\linewidth}
  \subfigure[A weighted labeled binary tree.]{
\centering
               \begin{tikzpicture}
[scale=.36,
level distance=1.8cm,
  level 1/.style={sibling distance=7cm},
  level 2/.style={sibling distance=3.5cm},
  empty/.style = {circle, draw, fill = white, minimum size = .2cm}]
  \node {*}
    child {node {+}
      child {node {\tiny  $w_1\phi_1(x_2)$}}
      child {node {\tiny  $w_2\phi_3(x_1)$}}
    }
    child {node {+}
    child {node {\tiny  $w_3\phi_3(x_2)$}}
      child {node {\tiny  $w_4\phi_1(x_3)$}}
    };
\end{tikzpicture}
\label{wtree}}
\caption{Two tree examples. }
\end{figure}
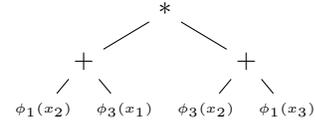
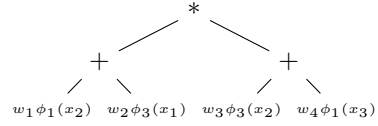

We define a functional structure built compositionally by adding and multiplying a
small number of basis functions. A straightforward visualization of this structure is a labeled binary tree. Given an infinite set of basis functions $\Phi=\{ \phi_l, l=1,2,\cdots,\infty\}$ on $\mathbb{R} \rightarrow [-1,1]$ and a truncation parameter $q$, $\mathcal{F}_{2k+1}$ is a set of binary trees where:
\begin{enumerate}
\item there are no more than $2k+1$ nodes,
\item the labels of non-leaf nodes can be either ``+'' or ``*'',
\item the label of a leaf node can only be a function in $\Phi$ on a specific dimension of the covariate $\bm{x}=(x_1,\dots,x_p)$, that is $\phi_i(x_j)$ for any $i=1,\dots,q$ and $j=1,\dots,p$,
\end{enumerate}
Figure \ref{tree1} gives an example of a labeled binary tree with seven nodes. All the leaves are $\phi_i(x_j)$s, while all non-leaf nodes are operations. Note that if we switch the left sub-tree and the right sub-tree, we obtain an equivalent structure.

As pointed out later in Remark \ref{remark1}, in the nonparametric setting, both $k$ and $q$ are allowed to grow as a function of $n$.
\paragraph{The Weighted Labeled Binary Tree.}
It is easy to show that a labeled binary tree with $2k+1$ nodes has the following properties:
\begin{enumerate}
\item It includes $k$ operations.
\item It has $k+1$ leaves.
\end{enumerate}
An easy way to add weights is to directly add weights to each leaf node, as shown in Figure \ref{wtree}. So given a tree structure $f\in \mathcal{F}_{2k+1}$, we can define $\mathcal{W}(f)$ as the set of all weighted labeled binary trees given $f$, with constraint $\norm{\w}_1\le 1$. Additionally, we define 
\begin{equation}
\mathcal{W}_{2k+1}=\bigcup\limits_{f\in \mathcal{F}_{2k+1}}\mathcal{W}(f).
\end{equation}

For a fixed  $f \in \F_{2k+1}$, any $h \in \mathcal{W}(f)$ can be rewritten as a summation of some basis functions and some  productions of basis functions. For instance, given $\w$ and the labeled binary tree structure $f_0$ in Figure \ref{tree1}, Figure \ref{wtree} represents a function $h(x;f_0,\w)=(w_1\phi_1(x_2)+w_2\phi_3(x_1))*(w_3\phi_3(x_2)+w_4\phi_1(x_3))$, and it is the summation of 4 interactions $w_1w_3\phi_1(x_2)\phi_3(x_2)$, $w_1w_4\phi_1(x_2)\phi_1(x_3)$, $w_2w_3\phi_3(x_1))\phi_3(x_2)$, and $w_2w_4\phi_3(x_1)\phi_1(x_3)$. Equivalently, 
$h(x;f_0,\w)=\langle \vv,\uu \rangle$, where $\vv=\psi_{f_0}^{v}(\w)=(w_1w_3,w_1w_4,w_2w_3,w_2w_4)$ and 
$\uu=\psi_{f_0}^{u}(\x)=(\phi_1(x_2)\phi_3(x_2),\phi_1(x_2)\phi_1(x_3),\phi_3(x_1)\phi_3(x_2),\phi_3(x_1)\phi_1(x_3))$.
Similarly, for any labeled binary tree $f$, we could write $h=h(x;f,\w)\in \mathcal{W}(f)$ as an inner product of two vectors $\vv$ and $\uu$:
\begin{align}
\label{summationdecompsition}
&h(x;f,\w)=\langle \vv,\uu \rangle,\quad \vv=\psi_{f}^{v}(\w), \quad \uu=\psi_{f}^{u}(\x),
\end{align}
where the transformation function $\psi_{f}^{v}$ and $\psi_{f}^{u}$ depend on $f$. Define the length of the vector $\vv$ and $\uu$ as $M_f$, and $M_f$ also depends on $f$. Define 
\begin{equation}
\label{defM}
M_{2k+1}=\max\limits_{f\in \F_{2k+1}}M_f.
\end{equation}
	\begin{lemma}
		\label{lemma0}
		If $\norm{\w}_1\le 1$ and $\norm{\phi_i}_{\infty}\le 1 \; \forall i$, regardless of $f$, we always have $\norm{\vv}_1 \le 1$
		and $\norm{\uu}_{\infty}\le 1$. 
\end{lemma}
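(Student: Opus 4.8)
The plan is to unfold the tree recursively into its sum-of-products form and bound the two norms separately by structural induction on the subtrees of $f$. The bound $\norm{\uu}_{\infty}\le 1$ is immediate: by the decomposition (\ref{summationdecompsition}), every coordinate of $\uu=\psi_f^u(\x)$ is a product of finitely many evaluations of the form $\phi_i(x_j)$, each of which lies in $[-1,1]$ because $\norm{\phi_i}_{\infty}\le 1$; hence every coordinate lies in $[-1,1]$ and $\norm{\uu}_{\infty}\le 1$. This uses nothing about the tree beyond the fact that its leaves are single basis-function evaluations.

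For $\norm{\vv}_1\le 1$, I would introduce, for each subtree $T$ of $f$, the function $h_T=\dotprod{\vv_T}{\uu_T}$ it computes together with the vectors $\vv_T,\uu_T$ obtained by restricting the decomposition to $T$; let $S_T=\sum_{j\in\mathrm{leaves}(T)}|w_j|$. Unfolding one level gives three cases: (i) if $T$ is a leaf with weight $w$, then $\vv_T=(w)$ and $\norm{\vv_T}_1=|w|=S_T$; (ii) if $T$ has root ``$+$'' with subtrees $T_1,T_2$, then $h_T=h_{T_1}+h_{T_2}$, so the coordinates of $\vv_T$ are those of $\vv_{T_1}$ and $\vv_{T_2}$ (with coefficients of coinciding monomials added), whence $\norm{\vv_T}_1\le\norm{\vv_{T_1}}_1+\norm{\vv_{T_2}}_1$; (iii) if $T$ has root ``$*$'', then $h_T=h_{T_1}h_{T_2}=\sum_{a,b}(\vv_{T_1})_a(\vv_{T_2})_b(\uu_{T_1})_a(\uu_{T_2})_b$, so $\vv_T$ is the outer product $\vv_{T_1}\otimes\vv_{T_2}$ (again with coinciding monomials merged), whence $\norm{\vv_T}_1\le\norm{\vv_{T_1}}_1\,\norm{\vv_{T_2}}_1$.

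I would then prove by induction on the number of nodes of $T$ the strengthened statement $\norm{\vv_T}_1\le S_T$. The base case is (i). At a ``$+$'' node, (ii) and the inductive hypothesis give $\norm{\vv_T}_1\le S_{T_1}+S_{T_2}=S_T$. At a ``$*$'' node, (iii) gives $\norm{\vv_T}_1\le S_{T_1}S_{T_2}$; since $\mathrm{leaves}(T_1)$ and $\mathrm{leaves}(T_2)$ are disjoint, $S_{T_1}+S_{T_2}=S_T\le\norm{\w}_1\le 1$, so in particular $S_{T_2}\le 1$, and therefore $S_{T_1}S_{T_2}\le S_{T_1}\le S_{T_1}+S_{T_2}=S_T$. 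Taking $T=f$ yields $\norm{\vv}_1\le S_f=\norm{\w}_1\le 1$, as desired.

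The only delicate point is the ``$*$'' step: a direct induction on ``$\norm{\vv_T}_1\le 1$'' breaks at ``$+$'' nodes (where the bounds add), so one must carry the sharper invariant $\norm{\vv_T}_1\le S_T$, and it is exactly the hypothesis $\norm{\w}_1\le 1$ --- which forces every partial leaf-weight sum $S_{T_i}$ to be at most $1$ --- that lets the product $S_{T_1}S_{T_2}$ collapse back to at most $S_{T_1}+S_{T_2}$. Everything else is routine bookkeeping about how $\psi_f^v$ and $\psi_f^u$ act as concatenation at ``$+$'' nodes and as outer product at ``$*$'' nodes.
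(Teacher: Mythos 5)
Your proof is correct and follows essentially the same route as the paper's: the $\norm{\uu}_\infty\le 1$ bound comes directly from each coordinate being a product of $[-1,1]$-valued basis evaluations, and the $\norm{\vv}_1$ bound is obtained by induction on the tree with the strengthened invariant $\norm{\vv_T}_1\le\norm{\w_T}_1$ (your $S_T$), splitting on whether the root is ``$+$'' (norms add) or ``$*$'' (norms multiply, then collapse using $\norm{\w}_1\le 1$). The paper closes the ``$*$'' case via $\norm{\w_l}_1\norm{\w_r}_1\le\norm{\w}_1^2\le\norm{\w}_1$ while you use $S_{T_1}S_{T_2}\le S_{T_1}\le S_T$, but this is only a cosmetic difference.
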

\begin{proof}[Proof sketch]
	By induction.
\end{proof}
(Detailed proofs can be found on Appendix A.)

\section{Sufficient Number of Samples}
In this section, we provide a generalization bound that holds for any data distribution and any labeled binary tree. This not only implies the sufficient number of samples to recover a labeled binary tree from a given dataset, but also guarantees that the empirical risk (i.e., the risk with respect to a training set) is a consistent estimator of the true risk (i.e., the risk with respect to the data distribution). We first bound the size of $\mathcal{F}_{2k+1}$, and then show a Rademacher-based uniform convergence guarantee. 

\paragraph{Properties of the Labeled Binary Tree Set.}
Let $|\mathcal{F}_{2k+1}|$ denote the size of $\mathcal{F}_{2k+1}$: the labeled binary tree set with no more than $2k+1$ nodes. The lemma below gives the upper bound of the size of the functional space, which will be used later to show the uniform convergence.
\begin{lemma}
\label{lemma1}
For $k \geq 1$, we have $|\mathcal{F}_{2k+1}|\le 4k(k)!(pq)^{k+1}$.
\end{lemma}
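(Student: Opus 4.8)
The plan is to count $\mathcal{F}_{2k+1}$ by stratifying its trees according to their exact number of nodes and then collapsing the resulting finite sum by an induction on $k$. Since the labels ``+'' and ``*'' are binary, every non-leaf node of a tree in $\mathcal{F}_{2k+1}$ has exactly two children, so a tree with $m$ operation nodes has $m+1$ leaves and $2m+1$ nodes in total; hence $\mathcal{F}_{2k+1}$ is the disjoint union, over $m=0,1,\dots,k$, of the trees with exactly $2m+1$ nodes. A tree with exactly $2m+1$ nodes is determined by three independent choices: (i) its unlabeled shape, a full binary tree with $m$ internal nodes, of which there are $C_m:=\tfrac{1}{m+1}\binom{2m}{m}$ (the $m$-th Catalan number); (ii) a label in $\{+,*\}$ for each internal node, giving $2^m$ possibilities; and (iii) a leaf label $\phi_i(x_j)$, $i\in\{1,\dots,q\}$, $j\in\{1,\dots,p\}$, for each of the $m+1$ leaves, giving $(pq)^{m+1}$ possibilities. (Identifying a tree with its left--right mirror, as the paper notes, only decreases this count, so what follows is a valid upper bound in either convention.) Therefore
\begin{equation}
\label{eqn:lemma1sum}
|\mathcal{F}_{2k+1}| \;=\; \sum_{m=0}^{k} C_m\, 2^m\, (pq)^{m+1}.
\end{equation}

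It remains to bound the right-hand side of \eqref{eqn:lemma1sum} by $4k\cdot k!\,(pq)^{k+1}$, and this is the real work. Crude estimates are not enough: bounding every term by the last one costs a factor $k+1$ and already fails at $k=2$ ($3\,C_2\,2^2 = 24 > 16$), while using $C_m\le 4^m$ is far too lossy because $8^k$ is not below $4k\cdot k!$ until $k$ is large. Instead I would induct on $k$. For $k=1$ the sum is $pq+2(pq)^2 \le 3(pq)^2 \le 4(pq)^2$ because $pq\ge 1$. For the inductive step ($k\ge 2$) I peel off the top stratum, $|\mathcal{F}_{2k+1}| = |\mathcal{F}_{2k-1}| + C_k 2^k (pq)^{k+1}$, insert the inductive hypothesis $|\mathcal{F}_{2k-1}| \le 4(k-1)(k-1)!\,(pq)^{k}$, and use $pq\ge 1$; a short manipulation (dividing by $(pq)^k$ and absorbing the leftover factor $pq$) reduces the desired inequality to the single $p,q$-free statement
\begin{equation}
\label{eqn:lemma1cat}
C_k\, 2^k \;\le\; 4\,(k-1)!\,(k^2-k+1), \qquad k\ge 1,
\end{equation}
since $4k\cdot k! - 4(k-1)(k-1)! = 4(k-1)!(k^2-k+1)$.

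Finally I would establish \eqref{eqn:lemma1cat} itself: verify $k=1,2,3$ directly, and for $k\ge 4$ compare growth rates. When $k$ increases by one, the right-hand side of \eqref{eqn:lemma1cat} multiplies by $k\,\tfrac{k^2+k+1}{k^2-k+1}$, which is at least $k$, whereas $C_k 2^k$ multiplies only by $\tfrac{4(2k+1)}{k+2} = 8 - \tfrac{12}{k+2} < 8$; since the former exceeds the latter for every $k\ge 4$ and \eqref{eqn:lemma1cat} holds at $k=4$, it persists for all larger $k$. I expect \eqref{eqn:lemma1cat} --- or, more precisely, carrying enough precision through the reduction so that no wasteful step (such as $C_m\le 4^m$, or assuming $pq\ge 2$ rather than $pq\ge 1$) is used --- to be the only genuine obstacle; the combinatorial identity \eqref{eqn:lemma1sum} and the outer induction are routine.
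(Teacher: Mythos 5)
Your proposal is correct, and it reaches the paper's bound by a recognizably different route even though both arguments stratify $\mathcal{F}_{2k+1}$ by exact node count and then induct. The paper works with the exact-count strata $\mathcal{F}^{*}_{2m+1}$ directly: it inducts on $m$ using the root decomposition $|\mathcal{F}^{*}_{2K+1}|=2\sum_{i}|\mathcal{F}^{*}_{i}||\mathcal{F}^{*}_{2K-i}|$ to prove $|\mathcal{F}^{*}_{2k+1}|\le 2k\,k!\,(pq)^{k+1}$, the key combinatorial ingredient being $\sum_{i=0}^{k-1} i!\,(k-1-i)!\le k\,k!/2^{k-1}$ (proved via Jensen's inequality applied to $1/x$), and only at the end sums the strata, absorbing the lower ones into one extra factor of $2$. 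You instead write each stratum in closed form as $C_m 2^m (pq)^{m+1}$ with $C_m$ the Catalan number (which is exactly what the paper's convolution recursion generates, since $C_m$ obeys the same recursion), and run the induction on the \emph{cumulative} count, peeling off the top stratum and reducing everything to the single inequality $C_k 2^k\le 4(k-1)!\,(k^2-k+1)$. Your reduction is sound (the base case, the identity $4k\,k!-4(k-1)(k-1)!=4(k-1)!(k^2-k+1)$, and the use of $pq\ge 1$ all check out), and the Catalan inequality is true; the only slightly loose spot is the growth-rate comparison, where the stated bounds (ratio of the right side ``at least $k$'' versus ratio of the left side ``$<8$'') only settle $k\ge 8$ outright, so $k=4,\dots,7$ need the exact ratios $k\frac{k^2+k+1}{k^2-k+1}$ versus $\frac{4(2k+1)}{k+2}$ — a finite check that does succeed. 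What your version buys is an exact formula showing the true count grows like $8^k(pq)^{k+1}$ up to polynomial factors, which makes transparent that the paper's $4k\,k!\,(pq)^{k+1}$ bound is quite generous for large $k$; what the paper's version buys is that it never needs Catalan numbers or their growth rate, only an elementary factorial-sum estimate.
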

\begin{proof}[Proof sketch]
By induction.
\end{proof}
The lemma below gives the upper bound of $M_{2k+1}$, which is used to later to bound the Rademacher complexity. Remind that $M_{2k+1}$ is defined in eq.(\ref{defM}).
\begin{lemma}
\label{lemmadecomposition}
$M_{2k+1}<(1.45)^{k+1}$.
\end{lemma}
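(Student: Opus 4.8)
The plan is to first make explicit the recursion for $M_f$ that is implicit in the construction of the transformation $\psi_f$, and then bound it. Writing $f_L,f_R$ for the two subtrees of a non-leaf tree $f$, expanding $h(\x;f,\w)$ by distributivity shows that the number of interaction terms (the common length of $\vv=\psi_f^v(\w)$ and $\uu=\psi_f^u(\x)$) satisfies: $M_f=1$ if $f$ is a single leaf; $M_f=M_{f_L}+M_{f_R}$ if the root of $f$ is ``$+$''; and $M_f=M_{f_L}\cdot M_{f_R}$ if the root is ``$*$'' (a ``$+$'' concatenates the two lists of monomials, a ``$*$'' forms all pairwise products, exactly as in the $(w_1w_3,w_1w_4,w_2w_3,w_2w_4)$ example). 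Since a binary tree with $2k'+1$ nodes has exactly $k'+1$ leaves, and enlarging a tree (replacing a leaf by a ``$+$''-node over two leaves) never decreases $M_f$, the quantity $N(L):=\max\{\,M_f : f \text{ has } L \text{ leaves}\,\}$ is non-decreasing in $L$ and $M_{2k+1}=N(k+1)$. So it suffices to show $N(k+1)<(1.45)^{k+1}$.

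The core claim is $N(L)\le 3^{L/3}$ for every $L\ge 1$, proved by strong induction on $L$. The base case is $N(1)=1\le 3^{1/3}$. For $L\ge 2$, any $f$ with $L$ leaves has root ``$+$'' or ``$*$'' with subtrees carrying $L_1$ and $L_2=L-L_1$ leaves, $L_1,L_2\ge 1$. If the root is ``$*$'', the recursion and the inductive hypothesis give $M_f\le N(L_1)N(L_2)\le 3^{L_1/3}3^{L_2/3}=3^{L/3}$. If the root is ``$+$'', then $M_f\le 3^{L_1/3}+3^{L_2/3}$; when $L_1,L_2\ge 2$ this is at most $2\cdot 3^{-2/3}\cdot 3^{L/3}<3^{L/3}$, and when (say) $L_1=1$ it equals $1+3^{(L-1)/3}\le 3^{1/3}\cdot 3^{(L-1)/3}=3^{L/3}$, which holds as soon as $3^{(L-1)/3}\ge 1/(3^{1/3}-1)\approx 2.26$, i.e.\ for all $L\ge 4$. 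The two leftover cases $L\in\{2,3\}$ are checked by hand: $N(2)=2\le 3^{2/3}$ and $N(3)\le\max(1+N(2),\,N(1)N(2))=3=3^{3/3}$. Finally, $1.45^3=3.048625>3$ gives $3^{1/3}<1.45$, hence $M_{2k+1}=N(k+1)\le 3^{(k+1)/3}=(3^{1/3})^{k+1}<(1.45)^{k+1}$.

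The routine ingredients here (the recursion for $M_f$, the monotonicity reduction, and the arithmetic fact $1.45^3>3$) are immediate. The one genuinely delicate point, and where I expect to spend the most care, is the ``$+$'' case of the induction: the constant $3^{1/3}$ is precisely the asymptotic growth rate of the largest-product integer partition, so there is essentially no slack, and the naive estimate $3^{L_1/3}+3^{L_2/3}\le 3^{(L_1+L_2)/3}$ genuinely fails when one subtree is a single leaf or otherwise small. The fix is exactly the case split above — peel off single-leaf subtrees and dispatch $L\le 3$ directly — and one must be careful that the induction's use of $N(L-1),N(L_1),N(L_2)$ always refers to strictly smaller $L$, which it does since $L_1,L_2\ge 1$. (An equivalent route: first show $N(L)$ equals $\max\{\prod_i a_i : a_i\ge 1,\ \sum_i a_i=L\}$ by the super-additivity and super-multiplicativity of that partition function, then quote the classical bound $\prod_i a_i\le 3^{L/3}$; this trades the case split for a short lemma on the partition function.)
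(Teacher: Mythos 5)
Your proof is correct, and its skeleton is the same as the paper's: both arguments induct over the recursive decomposition of a tree into its two subtrees, using $M_f=M_{f_L}+M_{f_R}$ at a ``$+$'' root and $M_f=M_{f_L}M_{f_R}$ at a ``$*$'' root. The differences are in packaging and in rigor. You reparametrize by the number of leaves and prove the sharper intermediate bound $N(L)\le 3^{L/3}$ (identifying $3^{1/3}\approx 1.4422$, the max-product-partition constant, as the true growth rate), then convert via $1.45^3>3$; the paper instead inducts directly on the node count with base $1.45$, checking $k=0,1,2$ by hand. More substantively, your explicit case split in the additive step --- peeling off single-leaf subtrees and verifying $L\in\{2,3\}$ separately --- supplies exactly the care the paper's own induction glosses over: as written, the paper bounds $M^{*}_i+M^{*}_{2K-i}$ by $1.45^{(i-1)/2+1}+1.45^{(2K-i-1)/2+1}$ and asserts this is at most $1.45^{K+1}$, but for $K=3$, $i=1$ that upper bound equals $1.45+1.45^3\approx 4.499$, which exceeds $1.45^4\approx 4.421$ (the lemma still holds there, since the actual value is $M^{*}_7=4$, but the displayed chain of inequalities does not close). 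So your version buys a cleaner constant and a genuinely airtight additive case at the cost of a couple of small-$L$ hand checks, while the paper's buys brevity.
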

\begin{proof}[Proof sketch]
By induction.
\end{proof}
\paragraph{Rademacher-based Uniform Convergence.}

Next, we present our first main theorem, which guarantees a uniform convergence of the empirical risk to the true risk, regardless of the tree structure and weights.

Assume that $d : \Y \times \Y \to [0,1]$ is a 1-Lipschitz function related to the prediction problem. For regression, we assume $\mathcal{Y}=\R$, and $d(y,y') = \min(1, (y-y')^2/2)$, while for classification, we assume $\mathcal{Y}=\{-1,1\}$, and $d(y,y^{'})= \min (1,\max (0,1-yy^{'}))$. Let $z=(x,y)\in \Z$, where $\Z=\X\times \Y$. Furthermore, let $\mathcal{H}(f)=\{ h(z)= d(y,g(x)), g\in\mathcal{W}(f)\}$ for a fixed labeled binary tree $f$. Let $\mathcal{H}_{2k+1}$ be a hypothesis class satisfying 
\begin{equation*}
\mathcal{H}_{2k+1}=\bigcup\limits_{f\in \mathcal{F}_{2k+1}}\mathcal{H}(f).
\end{equation*}

For every $h \in \mathcal{H}(f)$, we define the true and empirical risks as 
\begin{equation}
\E_\D[h]=\E_{z\sim \D}[h(z)],\quad \Eh_S[h]=\frac{1}{n}\sum\limits_{i=1}^nh(z_i).
\end{equation}

Next, we state our generalization bound that shows that $O(k \log (pq) + \log k!)$ samples are sufficient for learning.
\begin{theorem} \label{thm:radunifconv}
Let $z=(x,y)$ be a random variable of support $\Z$ and distribution $\D$.
Let ${S = \{z_1 \dots z_n\}}$ be a dataset of $n$ i.i.d. samples drawn from $\D$.
Fix ${\delta \in (0,1)}$.
With probability at least ${1-\delta}$ over the choice of $S$, we have:
\begin{align*}
(\forall f\in &\F_{2k+1},\forall h \in \mathcal{H}(f))\\
\E_\D[h] &\leq \Eh_S[h] + 2\sqrt{\frac{k+1}{n}}+ \\
&\sqrt{\frac{(k+1)\log pq+\log 8k(k)!+\log{(1/\delta)}}{2n}}
\end{align*}
\end{theorem}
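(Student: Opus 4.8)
The plan is to establish the inequality first for a single, fixed tree structure $f \in \F_{2k+1}$, and then to lift it to the whole class by a union bound over the finitely many structures in $\F_{2k+1}$. The two separate square-root terms in the statement correspond exactly to these two ingredients: $2\sqrt{(k+1)/n}$ is (twice) the Rademacher complexity of the single-structure hypothesis class $\HH(f)$, while the remaining term is the usual bounded-differences deviation term whose numerator has been inflated by $\log|\F_{2k+1}|$ because of the union bound, and then bounded via Lemma~\ref{lemma1}. Note that every $h$ that the theorem quantifies over lies in some $\HH(f)$, since $\HH_{2k+1}=\bigcup_{f}\HH(f)$.

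First, I would bound the (expected) Rademacher complexity $\Rademacher_n(\HH(f))$ for fixed $f$. Every $h\in\HH(f)$ has the form $h(z)=d(y,g(x))$ with $g\in\W(f)$, and $h$ takes values in $[0,1]$. Since $d$ is $1$-Lipschitz in its second argument (as assumed), the Ledoux--Talagrand contraction inequality gives $\Rademacher_n(\HH(f))\le\Rademacher_n(\W(f))$, where $\W(f)$ is now viewed as a class of functions of $x$ alone. By the decomposition~(\ref{summationdecompsition}), each $g\in\W(f)$ equals $\dotprod{\vv}{\uu}$ with $\uu=\psi_f^u(\x)\in[-1,1]^{M_f}$ (each coordinate is a product of values $\phi_i(x_j)\in[-1,1]$) and, by Lemma~\ref{lemma0}, $\norm{\vv}_1\le 1$. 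Hence $\W(f)$ lies in the absolutely convex hull of the $M_f$ coordinate maps $\x\mapsto [\psi_f^u(\x)]_m$, each bounded by $1$ on the sample; since Rademacher complexity is unchanged under passing to a convex hull, Massart's finite-class lemma bounds $\Rademacher_n(\W(f))$ by a quantity of order $\sqrt{\log M_f/n}$. Combining this with $M_f\le M_{2k+1}<(1.45)^{k+1}$ (Lemma~\ref{lemmadecomposition}; here one uses $2\log 1.45<1$, so that $2\log M_f<k+1$, together with the integrality of $M_f$ for a short check at small $k$) yields $2\Rademacher_n(\HH(f))\le 2\sqrt{(k+1)/n}$.

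Next, for fixed $f$ I would invoke the standard two-sided Rademacher uniform-convergence bound for a $[0,1]$-valued class (bounded-differences concentration of $\sup_{h\in\HH(f)}|\E_\D[h]-\Eh_S[h]|$ about its mean, with increments $1/n$, combined with symmetrization): with probability at least $1-\delta'$, every $h\in\HH(f)$ satisfies $\E_\D[h]\le\Eh_S[h]+2\Rademacher_n(\HH(f))+\sqrt{\log(2/\delta')/(2n)}$, where the middle term is $\le 2\sqrt{(k+1)/n}$ by the previous step. Applying this with $\delta'=\delta/|\F_{2k+1}|$ for each $f\in\F_{2k+1}$ and taking a union bound, the total failure probability is at most $\delta$; on the good event, every $f\in\F_{2k+1}$ and every $h\in\HH(f)$ obey $\E_\D[h]\le\Eh_S[h]+2\sqrt{(k+1)/n}+\sqrt{\log(2|\F_{2k+1}|/\delta)/(2n)}$. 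Finally, Lemma~\ref{lemma1} gives $2|\F_{2k+1}|\le 8k(k)!(pq)^{k+1}$, whence $\log(2|\F_{2k+1}|/\delta)\le(k+1)\log pq+\log 8k(k)!+\log(1/\delta)$, which is precisely the stated bound.

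The main obstacle is the first step. The difficulty is to see past the nonlinear, compositional sum--product structure and recognize that, once the structure $f$ is fixed, the decomposition~(\ref{summationdecompsition}) together with Lemma~\ref{lemma0} collapses $\W(f)$ into an ordinary $\ell_1$-ball-bounded linear class over at most $M_f$ coordinates bounded by $1$, so that its complexity is governed solely by $\log M_f$ and \emph{not} by $p$ or $q$. The real content is then the exponential-in-$k$ bound $M_{2k+1}<(1.45)^{k+1}$ of Lemma~\ref{lemmadecomposition}, whose base must be --- and is --- small enough that $\log M_f=O(k)$ turns the complexity term into exactly $2\sqrt{(k+1)/n}$. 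Once this is in hand, the contraction step, the $[0,1]$-valued uniform-convergence theorem, and the union bound over $\F_{2k+1}$ are all routine.
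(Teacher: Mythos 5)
Your proposal is correct and follows essentially the same route as the paper's proof: McDiarmid's inequality on $\varphi_f(S)=\sup_{h\in\HH(f)}(\E_\D[h]-\Eh_S[h])$ for fixed $f$, a union bound over $\F_{2k+1}$ controlled by Lemma~\ref{lemma1}, symmetrization plus Ledoux--Talagrand contraction to pass to $\Rademacher_n(\W(f))$, and the $\ell_1$/$\ell_\infty$ linear-class argument (equivalent to your convex-hull/Massart phrasing) combined with Lemma~\ref{lemmadecomposition} and $2\log 1.45<1$ to obtain $2\sqrt{(k+1)/n}$. The accounting of the factor $2|\F_{2k+1}|$ inside the logarithm also matches the stated constant $8k(k)!$.
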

\begin{proof}
Given a function ${h : \Z^n \to \R}$, we define 
$\E_S[h(S)] = \E_{S \sim \D^n}[h (S)]$.
The function $\varphi_f(S)=\sup_{h \in \HH(f)}{\left( \E_\D[h] - \Eh_S[h] \right)}$ fulfills the condition in McDiarmid's inequality and $\mathcal{H}(f) \subseteq \{ h | h : \Z \to [0,1]\}$, by Lemma \ref{lem:varphilipschitz} (Please see Appendix B.), therefore 
$\P[\varphi_f(S) - \E_S[\varphi_f(S)] \geq \eps]\leq \exp{\frac{-2 \eps^2}{\sum_{i=1}^n{(1/n)^2}}} = \exp{-2 n \eps^2}$. Furthermore, by applying the union bound for all $f \in \mathcal{F}_{2k+1}$, by Lemma \ref{lemma1}, and by Hoeffding's inequality, we have:
\begin{equation*}
\begin{split}
&\mathbb{P}[(\exists f \in \mathcal{F}_{2k+1}), \varphi_f(S) - \E_S[\varphi_f(S)] \geq \eps]]\le \\
&\sum\limits_{f\in \mathcal{F}_{2k+1}}\P[\varphi_f(S) - \E_S[\varphi_f(S)] \geq \eps]\le 2|\mathcal{F}_{2k+1}|e^{-2n\epsilon^2}\\
&\leq 8k(k)!(pq)^{k+1}e^{-2n\epsilon^2}
\end{split}
\end{equation*}
Equivalently,
$
\mathbb{P}[(\forall f \in \mathcal{F}_{2k+1}), \varphi_f(S) - \E_S[\varphi_f(S)] \leq \eps]]\geq 1-8k(k)!(pq)^{k+1}e^{-2n\epsilon^2}$.

Setting ${8k(k)!(pq)^{k+1}e^{-2n\epsilon^2} = \delta}$, we get ${\eps = \sqrt{\frac{(k+1)\log pq+\log 8k(k)!+\log{(1/\delta)}}{2n}}}$.
Thus:
\begin{align}
\label{eq:step1}
& \P\hns \; [(\forall f \in \mathcal{F}_{2k+1}), \varphi_f(S) < \E_S[\varphi_f(S)] + \nonumber \\
& \sqrt{\frac{(k+1)\log pq+\log 8k(k)!+\log{(1/\delta)}}{2n}}] \nonumber \\
 & \geq 1-\delta
\end{align}
Note that by the definition of the supremum, by the definition of the function ${\varphi_f : \Z^n \to \R}$, and by eq.\eqref{eq:step1}, with probability at least ${1-\delta}$, simultaneously for all $f \in \F_{2k+1}$ and $h \in \mathcal{H}(f) $
\begin{align} \label{eq:step2}
\E_\D[h] - \Eh_S[h] & \leq \sup_{h \in \mathcal{H}(f)}{\left( \E_\D[h] - \Eh_S[h] \right)} \nonumber \\
 &\hspace{-.2in} = \varphi_f(S) \nonumber \\
 &\hspace{-.2in}<\sqrt{\frac{(k+1)\log pq+\log 8k(k)!+\log{(1/\delta)}}{2n}}+ \nonumber \\
  &\hspace{-.2in} \E_S[\varphi_f(S)] 
\end{align}
The next step is to bound ${\E_S[\varphi_f(S)]}$ in eq.\eqref{eq:step2} in terms of the Rademacher complexity of $\mathcal{W}(f)$. By the definition of $\varphi_f$, by the ghost sample technique, the Ledoux-Talagrand Contraction Lemma, we can show that
\begin{equation} \refstepcounter{equation}
\E_S[\varphi_f(S)]  = 2\Rademacher_n(\mathcal{H}(f))\le 2\Rademacher_n(\mathcal{W}(f)) \nonumber
\end{equation}
The final step is to bound $\Rademacher_n(\mathcal{W}(f))$, and it is sufficient to bound $\hat{\Rademacher}_S(\mathcal{W}(f))$ for any $f \in \F_{2k+1}$. Then for a fixed  $f \in \F_{2k+1}$, any $g \in \mathcal{W}(f)$ can be rewritten as a summation of no more than $[(1.45)^{k+1}]$ productions of basis functions, where $[m]$ denotes that largest integer smaller than or equal to $m$ according to Lemma \ref{lemmadecomposition}. We could decompose $h=h(x;f,\w)$ as in equation (\ref{summationdecompsition}), thus $h=h(\x;f,\w)=\langle \vv,\uu \rangle$, where $||\vv||_1\le 1$ and $||\uu||_{\infty}\le 1$ by Lemma \ref{lemma0}. By using a technique similar to \cite{Kakade08} for linear prediction, we have
\begingroup
\allowdisplaybreaks
\begin{align*} \refstepcounter{equation}
 \hspace{-0.2in} \hat{\Rademacher}_S(\mathcal{W}(f))& = \E_\sigma\left[ \sup_{g \in \mathcal{W}(f)}{\left( \frac{1}{n} \sum_{i=1}^n{\sigma_i g(\x\si{i})} \right)} \right] \nonumber \\
\hspace{-0.2in}&  = \E_\sigma\left[ \sup_{\norm{\w}_1 \leq 1}{\left( \frac{1}{n} \sum_{i=1}^n{\sigma_i g(\x\si{i};\w,f)} \right)} \right] \nonumber \\
\hspace{-0.2in} &  \le \frac{1}{n} {\rm\ } \E_\sigma\left[ \sup_{\norm{\vv}_1 \leq 1}{\left( \sum_{i=1}^n{\sigma_i \dotprod{\vv}{\uu\si{i}})} \right)} \right] \\
  \hspace{-0.2in}& = \frac{1}{n} {\rm\ } \E_\sigma\left[ \sup_{\norm{\vv}_1 \leq 1}{\dotprod{\vv}{\textstyle{ \sum_{i=1}^n{\sigma_i \uu\si{i}} }}} \right] \nonumber \\ 
 \hspace{-0.2in}&  = \frac{\norm{\vv}_1}{n} {\rm\ } \E_\sigma\left[ \norm{\textstyle{\sum_{i=1}^n{\sigma_i \uu\si{i}}}}_{\infty} \right] \nonumber \\ 
\hspace{-0.2in} & = \frac{1}{n} {\rm\ } \E_\sigma\left[ \sup\limits_j\textstyle{\sum_{i=1}^n{\sigma_i [\uu\si{i}]_j}} \right] \nonumber \\ 
 \hspace{-0.2in}& = \frac{\sqrt{2\log M_{2k+1}}}{n} {\rm\ }  \sup\limits_j\sqrt{\textstyle{\sum_{i=1}^n{ [\uu\si{i}]^2_j}}} \\ 
  \hspace{-0.2in}& \le \frac{\sqrt{2\log M_{2k+1}}}{n} {\rm\ }  \sqrt{\textstyle{n\norm{\uu}^2_{\infty}}} \nonumber\\
  \hspace{-0.2in}&  \le \sqrt{\frac{2\log M_{2k+1}}{n}} {\rm\ }  \nonumber\\
 \hspace{-0.2in} &  \le \sqrt{\frac{2(k+1)\log 1.45}{n}} {\rm\ }  \nonumber\\
 \hspace{-0.2in} & < \sqrt{\frac{k+1}{n}} {\rm\ }  \nonumber
\end{align*}
\endgroup
Finally, we have
$\Rademacher_n(\mathcal{W}(f))=\E_{S\sim \D^n}[\hat{\Rademacher}_S(\mathcal{W}(f))]<\sqrt{\frac{k+1}{n}} $
\qedhere
\end{proof}
\begin{corollary} 
\label{cor1}
Define $\hat{h}=\argmin\limits_{h \in \HH_{2k+1}} \Eh_S[h]$, and $\bar{h}=\argmin\limits_{h \in \HH_{2k+1}} \E_\D[h]$. Then under the same setting of Theorem \ref{thm:radunifconv}, fix $\delta, \epsilon \in (0,1)$,
if \[n\ge \frac{3(k+1)(\log pq +8)+3\log 8k(k)!+6\log{(2/\delta)}}{2\epsilon^2}\], then $\E_\D[\hat{h}]-\E_\D[\bar{h}] \leq  \epsilon$ with probability at least ${1-\delta}$ over the choice of $S$.
\end{corollary}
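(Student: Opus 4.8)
The plan is to combine the uniform convergence bound of Theorem~\ref{thm:radunifconv} with a single-function concentration inequality for the population risk minimizer, via the standard empirical-risk-minimization sandwich. First I would invoke Theorem~\ref{thm:radunifconv} with $\delta/2$ in place of $\delta$: with probability at least $1-\delta/2$ over $S$, simultaneously for every $f\in\F_{2k+1}$ and every $h\in\HH(f)$ (hence for every $h\in\HH_{2k+1}=\bigcup_{f\in\F_{2k+1}}\HH(f)$, and in particular for $\hat h$),
\[
\E_\D[h]\le\Eh_S[h]+B_n,\qquad B_n:=2\sqrt{\tfrac{k+1}{n}}+\sqrt{\tfrac{(k+1)\log pq+\log 8k(k)!+\log(2/\delta)}{2n}}.
\]

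Second, because $\bar h$ is a fixed element of $\HH_{2k+1}$ that does not depend on the sample $S$, and $h(z)\in[0,1]$ for every $h\in\HH_{2k+1}$, Hoeffding's inequality applied to the i.i.d.\ terms $h(z_1),\dots,h(z_n)$ gives $\Eh_S[\bar h]\le\E_\D[\bar h]+\sqrt{\tfrac{\log(2/\delta)}{2n}}$ with probability at least $1-\delta/2$. A union bound makes both events hold together with probability at least $1-\delta$. On that event, using $\Eh_S[\hat h]\le\Eh_S[\bar h]$ (since $\hat h$ minimizes the empirical risk over $\HH_{2k+1}\ni\bar h$), I would chain
\[
\E_\D[\hat h]\le\Eh_S[\hat h]+B_n\le\Eh_S[\bar h]+B_n\le\E_\D[\bar h]+B_n+\sqrt{\tfrac{\log(2/\delta)}{2n}}.
\]

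Third, it remains to check that $B_n+\sqrt{\log(2/\delta)/(2n)}\le\eps$ under the stated hypothesis on $n$. Writing $2\sqrt{(k+1)/n}=\sqrt{8(k+1)/(2n)}$ and applying the power-mean inequality $\sqrt a+\sqrt b+\sqrt c\le\sqrt{3(a+b+c)}$ to the three square-root terms, the left-hand side is at most $\sqrt{\tfrac{3(k+1)(\log pq+8)+3\log 8k(k)!+6\log(2/\delta)}{2n}}$, which does not exceed $\eps$ precisely when $n\ge\tfrac{3(k+1)(\log pq+8)+3\log 8k(k)!+6\log(2/\delta)}{2\eps^2}$, yielding the claim.

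The argument is routine; the only points that require care are (i) that the second, one-sided bound is obtained by plain Hoeffding applied to $\bar h$, which is legitimate only because $\bar h$ is independent of $S$ --- Theorem~\ref{thm:radunifconv} controls $\E_\D[h]-\Eh_S[h]$, i.e.\ the wrong direction, so it cannot be reused here --- and (ii) the bookkeeping that splits the failure probability as $\delta/2+\delta/2$ together with the power-mean step that collapses the three square-root terms into the single expression matching the claimed sample size. I do not expect any genuine obstacle beyond this.
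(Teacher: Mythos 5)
Your proposal is correct and follows essentially the same route as the paper's own proof: Theorem~\ref{thm:radunifconv} at confidence $\delta/2$ applied to $\hat h$, a one-sided Hoeffding bound for the fixed function $\bar h$ at confidence $\delta/2$, the ERM inequality $\Eh_S[\hat h]\le\Eh_S[\bar h]$, and the $\sqrt{x}+\sqrt{y}+\sqrt{z}\le\sqrt{3x+3y+3z}$ step to collapse the three terms into the stated sample-size condition. Your remark that the Hoeffding step cannot be replaced by reusing the theorem (wrong direction of the deviation) is a correct and worthwhile observation that the paper leaves implicit.
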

\begin{proof}
	By Theorem \ref{thm:radunifconv}, with probability at least $1-\delta/2$ over the choice of $S$,
	\begin{align*}
	\E_\D[\hat{h}] &\leq \Eh_S[\hat{h}] + 2\sqrt{\frac{k+1}{n}}+ \\
	&\sqrt{\frac{(k+1)\log pq+\log 8k(k)!+\log{(2/\delta)}}{2n}}
	\end{align*}
By Hoeffding's inequality, with probability at least $1-\delta/2$ over the choice of $S$,
	\begin{align*}
 \Eh_S[\bar{h}] -\E_\D[\bar{h}] &\leq  \sqrt{\frac{\log (2/\delta)}{2n}}
\end{align*}
	Since $\hat{h}$ minimizes $\Eh_S[h]$, $\Eh_S[\hat{h}]\le \hat{\E}_S[\bar{h}]$.  With probability at least $1-\delta$ over the choice of $S$,
	\begin{align*}
	&\E_\D[\hat{h}]-\E_\D[\bar{h}] =\E_\D[\hat{h}]-\Eh_S[\bar{h}]+\Eh_S[\bar{h}]-\E_\D[\bar{h}]\\
	&\le  \E_\D[\hat{h}]-\Eh_S[\hat{h}]+\Eh_S[\bar{h}]-\E_\D[\bar{h}]\\
	&\le  \sqrt{\frac{(k+1)\log pq+\log 8k(k)!+\log{(2/\delta)}}{2n}}\\
	&+2\sqrt{\frac{k+1}{n}}+ \sqrt{\frac{\log (2/\delta)}{2n}}\\
	&\le \sqrt{\frac{3(k+1)(\log pq +8)+3\log 8k(k)!+6\log{(2/\delta)}}{2n}}
	\end{align*}
	Set $ \sqrt{\frac{3(k+1)(\log pq +8)+3\log 8k(k)!+6\log{(2/\delta)}}{2n}} \le \epsilon$. Equivalently $n\ge \frac{3(k+1)(\log pq +8)+3\log 8k(k)!+6\log{(2/\delta)}}{2\epsilon^2}$.
	Note that the last step is due to \[\sqrt{x}+\sqrt{y}+\sqrt{z}\le \sqrt{3x+3y+3z}\] Next, we present a useful remark in the nonparametric setting, where both $k$ and $q$ are allowed to grow as a function of $n$.
	\end{proof}

\begin{remark} 
	\label{remark1}
If $k \in O(\min(n^{1/2-\epsilon},\frac{n^{1-2\epsilon}}{\log p}))$, $q\in O(e^{n^{1/2-\epsilon}})$ for any $\epsilon\in (0,1/2)$, then the generalization error in Theorem \ref{thm:radunifconv} could be uniformly bounded by $O(n^{-\epsilon})$. 
\end{remark}
\section{Necessary Number of Samples}
In this section, we analyze the necessary number of samples to recover a labeled binary tree from a given dataset. To show the necessary number of samples, we restrict the operation to multiplications only, and consider unit weights. Note that the necessary number of samples in restricted ensembles yields a lower bound for the original problem. The use of restricted ensembles is customary for information-theoretic lower bounds \cite{santhanam2012information,Wang10}. We  utilize Fano's inequality as the main proof technique.

We construct a restricted ensemble as follows. Define a sequence of basis functions $\phi_i(z)=\sqrt{2}\cos(i\pi z)$, where $z \in [-1,1]$ for $i=1,\dots,q$. Furthermore, let $\x_i \sim Unif[-1,1]^p$, $\epsilon_i\sim N(0,\sigma_\epsilon^2)$.  Let $S=\{(\x_i,z_i):z_i=g(\x_i)+\epsilon_i,i=1,\dots,n\}$, and  $S^{'}=\{(\x_i,y_i):y_i=t(z_i),i=1,\dots,n\}$, where $t : \R \to \Y$  is a fixed function related to the prediction problem, as introduced in Section 2. This defines a Markov chain $g \to S \to S' \to \hat{g}$. To apply Fano's inequality, we need to further bound the mutual information $\mathbb{I}(g,S^{'})$ by a sum of Kullback-Leibler (KL) divergences of the form $KL(P_{\x ,y|g_i}|P_{\x ,y|g_i'})$ where $g_i$ and $g_i'$ are two different compositional trees. Consider a labeled binary tree subspace $\mathcal{G}_{2k+1}$ of $\mathcal{F}_{2k+1}$, where we only allow for multiplication nodes (i.e., additions are not allowed) and where each covariate $x_j$ of the independent variable $\bm{x}$ is used only once. Furthermore, we consider a restricted ensemble with unit weights. Equivalently, 
\begin{equation*}
\begin{split}
\mathcal{G}_{2k+1}=\{ &g_\mathcal{A}(\bm{x})=\prod\limits_{(i,j) \in \mathcal{A}}\phi_i(x_j):\mathcal{A}\subseteq\{ 1,\dots,q\} \times \{ 1,\dots,p\}, \\
&|\mathcal{A}|\le k+1, \forall (i,j) \in \mathcal{A}, \ l\ne i \Rightarrow (l,j)\not\in \mathcal{A} \}.
\end{split}
\end{equation*}
Let $c=|\mathcal{G}_{2k+1}|=\sum\limits_{i=1}^{k}q^{i+1}{p \choose i+1} $. 

Next, we state our information-theoretic lower bound that shows that $\Omega(k \log (pq) - \log k!)$ samples are necessary for learning.
\begin{theorem}
\label{thm:nec}
Assume nature uniformly picks a true hypothesis $\bar{g}$ from $\mathcal{G}_{2k+1}$. For any estimator $\hat{g}$, if $n\le (\log (q^{k+1}{p \choose k+1})-2\log 2) \sigma^2_\epsilon/2$, then $\P[\hat{g} \neq \bar{g}]\ge \frac{1}{2}$.
\end{theorem}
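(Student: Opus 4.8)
The plan is to apply Fano's inequality to the $c$-ary hypothesis testing problem in which nature draws $\bar g$ uniformly from $\mathcal{G}_{2k+1}$, where $c=|\mathcal{G}_{2k+1}|$. First I would use the Markov chain $\bar g \to S \to S' \to \hat g$ together with the data-processing inequality to pass to the cleaner Gaussian observation model: $\mathbb{I}(\bar g;\hat g)\le \mathbb{I}(\bar g;S')\le \mathbb{I}(\bar g;S)$, where $S=\{(\x_i,z_i)\}_{i=1}^n$ with $z_i=\bar g(\x_i)+\epsilon_i$ and $\epsilon_i\sim N(0,\sigma_\epsilon^2)$. Fano's inequality with a uniform prior then yields $\P[\hat g\ne\bar g]\ge 1-(\mathbb{I}(\bar g;S)+\log 2)/\log c$, so it suffices to show $\mathbb{I}(\bar g;S)\le \tfrac12\log c-\log 2$ under the stated bound on $n$.

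Next I would bound the mutual information by an average of pairwise KL divergences: for a uniform prior, convexity of KL gives $\mathbb{I}(\bar g;S)\le \frac{1}{c^2}\sum_{\mathcal{A},\mathcal{A}'}\mathrm{KL}(P_{S\mid g_\mathcal{A}}\,\|\,P_{S\mid g_{\mathcal{A}'}})$. Since the $\x_i$ are i.i.d. with a law independent of $\bar g$, the chain rule for KL makes each term factorize as $\mathrm{KL}(P_{S\mid g_\mathcal{A}}\,\|\,P_{S\mid g_{\mathcal{A}'}})=\sum_{i=1}^n\E_{\x_i}\big[\mathrm{KL}\big(N(g_\mathcal{A}(\x_i),\sigma_\epsilon^2)\,\|\,N(g_{\mathcal{A}'}(\x_i),\sigma_\epsilon^2)\big)\big]=\frac{n}{2\sigma_\epsilon^2}\,\E_{\x}\big[(g_\mathcal{A}(\x)-g_{\mathcal{A}'}(\x))^2\big]$.

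The technical heart is evaluating $\E_{\x}[(g_\mathcal{A}(\x)-g_{\mathcal{A}'}(\x))^2]$, which is where the structure of $\mathcal{G}_{2k+1}$ enters. Under the uniform density on $[-1,1]$ the basis functions $\phi_i(z)=\sqrt{2}\cos(i\pi z)$ are orthonormal, $\E[\phi_i(x_j)\phi_l(x_j)]=\iverson{i=l}$ and $\E[\phi_i(x_j)]=0$ for every integer $i\ge 1$. Because every covariate index $j$ appears at most once in each $\mathcal{A}$, and the coordinates $x_1,\dots,x_p$ are independent, $\E_\x[g_\mathcal{A}(\x)g_{\mathcal{A}'}(\x)]$ factors over coordinates: for $\mathcal{A}\ne\mathcal{A}'$ some index either lies in exactly one of $\mathcal{A},\mathcal{A}'$ (contributing a factor $\E[\phi_i(x_j)]=0$) or lies in both with different basis indices (contributing $\E[\phi_i(x_j)\phi_l(x_j)]=0$), so $\E_\x[g_\mathcal{A}g_{\mathcal{A}'}]=0$; and $\E_\x[g_\mathcal{A}^2]=1$ for every $\mathcal{A}$. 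Hence $\E_\x[(g_\mathcal{A}-g_{\mathcal{A}'})^2]=2$ for $\mathcal{A}\ne\mathcal{A}'$, every off-diagonal KL term equals $n/\sigma_\epsilon^2$, and $\mathbb{I}(\bar g;S)\le \frac{c(c-1)}{c^2}\cdot\frac{n}{\sigma_\epsilon^2}<\frac{n}{\sigma_\epsilon^2}$.

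Finally I would close the loop: the hypothesis $n\le(\log(q^{k+1}\binom{p}{k+1})-2\log 2)\sigma_\epsilon^2/2$ gives $n/\sigma_\epsilon^2\le\tfrac12\log(q^{k+1}\binom{p}{k+1})-\log 2$, and since $c=\sum_{i=1}^k q^{i+1}\binom{p}{i+1}\ge q^{k+1}\binom{p}{k+1}$ we obtain $\mathbb{I}(\bar g;S)\le n/\sigma_\epsilon^2\le\tfrac12\log c-\log 2$; substituting into Fano yields $\P[\hat g\ne\bar g]\ge 1-(\tfrac12\log c-\log 2+\log 2)/\log c=\tfrac12$. The main obstacle is the orthogonality computation of the third step — one must verify that the ``each covariate used once'' restriction in the definition of $\mathcal{G}_{2k+1}$ is exactly what forces every cross term to vanish, so that the ensemble behaves like an orthonormal system and all pairwise KL divergences are identical; the remaining steps are routine bookkeeping.
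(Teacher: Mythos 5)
Your proposal is correct and follows essentially the same route as the paper's proof: the Markov chain $\bar g \to S \to S' \to \hat g$ with data processing, the pairwise-KL upper bound on mutual information, the orthonormality of $\phi_i(z)=\sqrt{2}\cos(i\pi z)$ under the uniform law combined with the ``each covariate used once'' restriction to get $\E_\x[(g_\mathcal{A}-g_{\mathcal{A}'})^2]=2$ for distinct hypotheses, and Fano's inequality with $c\ge q^{k+1}\binom{p}{k+1}$. Your write-up is if anything slightly more careful than the paper's (you state $\|g_\mathcal{A}-g_{\mathcal{A}'}\|^2=2$ for $\mathcal{A}\ne\mathcal{A}'$ correctly, where the paper's equation~(\ref{eqn:innerproduct}) has the indicator's condition reversed), so no changes are needed.
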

\begin{proof}
Any $g_\mathcal{A} \in \mathcal{G}_{2k+1}$ can be decomposed by the dimension of $x$: \[g_\mathcal{A}(\x)=\prod\limits_{j=1}^{p}g_j^\mathcal{A}(x_j),\] where $g_j^\mathcal{A} =\phi_{i_j}$ if $\exists (i_j,j) \in \mathcal{A}$, and  $g_j^\mathcal{A} \equiv 1$ if $(i,j) \notin \mathcal{A}$ for any $i$. In addition, $\int_{-1}^1\frac{1}{2}\phi_i(x)dx=0$ and $\langle \phi_i,\phi_{i^{'}}\rangle =\int_{-1}^1\frac{1}{2}\phi_i(x)\phi_{i^{'}}(x)dx=I(i=i^{'})$. Thus, 
\begin{equation*}
\begin{split}
\langle g_\mathcal{A},g_{\mathcal{A}^{'}}\rangle &=\int_{-1}^1\cdots\int_{-1}^1\frac{1}{2^p}g_j^{\mathcal{A}}(x_j)g_j^{\mathcal{A}^{'}}(x_j)dx_1\cdots dx_p \\
&=\prod\limits_{j=1}^p \int_{-1}^1\frac{1}{2}g_j^{\mathcal{A}}(x_j)g_j^{\mathcal{A}^{'}}(x_j)dx_j\\
&=\prod\limits_{j=1}^pI(g_j^{\mathcal{A}}=g_j^{\mathcal{A}^{'}})\\
&=I( g_\mathcal{A}=g_{\mathcal{A}^{'}})
\end{split}
\end{equation*}
	Furthermore,
	\begin{equation}
	\begin{split}
	||g_{\mathcal{A}}-g_{\mathcal{A}^{'}}||^2&=\langle g_{\mathcal{A}},g_{\mathcal{A}}\rangle + \langle g_{\mathcal{A}^{'}},g_{\mathcal{A}^{'}}\rangle -2\langle g_{\mathcal{A}}, g_{\mathcal{A}^{'}} \rangle\\
	&=2I( g_\mathcal{A}=g_{\mathcal{A}^{'}}) 
	\end{split}
	\label{eqn:innerproduct}
	\end{equation}

By the data processing inequality \cite{Cover06} in the Markov chain $g \to S \to S' \to g$, and since the mutual information can be bounded by a pairwise KL bound \cite{Yu97}, we have
\begingroup
\allowdisplaybreaks
\begin{align*}
\mathbb{I}(\bar{g},S^{'})&\le \mathbb{I}(\bar{g},S)\\
&\le \frac{1}{c^2}\sum_{\mathcal{A}}\sum_{\mathcal{A}^{'}}KL(P_{S|g_\mathcal{A}}|P_{S|g_{\mathcal{A}^{'}}})\\
&=\frac{n}{c^2}\sum_{\mathcal{A}}\sum_{\mathcal{A}^{'}}KL(P_{\x,y|g_\mathcal{A}}|P_{\x,y|g_{\mathcal{A}^{'}}})\\
&=\frac{n}{c^2}\sum_{\mathcal{A}}\sum_{\mathcal{A}^{'}}KL(\mathcal{N}(g_{\mathcal{A}},\sigma_\epsilon^2)|\mathcal{N}(g_{\mathcal{A}^{'}},\sigma_\epsilon^2))\\
& =\frac{n}{c^2}\sum_{\mathcal{A}}\sum_{\mathcal{A}^{'}}\ \frac{||g_{\mathcal{A}}-g_{\mathcal{A}^{'}}||^2}{2\sigma_\epsilon^2}\\
&\le \frac{n}{c^2}*c^2 * \frac{2}{2\sigma_\epsilon^2}\\
&=\frac{n}{\sigma^2_\epsilon}
\end{align*}
\endgroup
By the Fano's inequality~\cite{Cover06} on the Markov chain $g\rightarrow S\rightarrow S^{'}\rightarrow \hat{g}$, we have 
\begin{equation*}
\begin{split}
\mathbb{P}[\hat{g} \ne \bar{g}]&\ge 1-\frac{\mathbb{I}(\bar{g},S^{'})+\log 2}{\log c} \ge 1-\frac{n/\sigma^2_\epsilon+\log 2}{\log c}
\end{split}
\end{equation*}
By making \[\frac{1}{2}=\mathbb{P}[\hat{g} \ne \bar{g}]\ge 1-\frac{n/\sigma^2_\epsilon+\log 2}{\log c},\] we have \[n \le (\log c-2\log 2) \sigma^2_\epsilon/2\] Since $c\ge q^{k+1}{p \choose k+1}$, $n \le (\log (q^{k+1}{p \choose k+1})-2\log 2) \sigma^2_\epsilon/2$ implies $\mathbb{P}[\hat{g} \ne \bar{g}]\ge\frac{1}{2}$.
If $p\gg k$, the above is equivalent to 
\begin{equation*}
\begin{split}
n&=\Omega\left(\frac{\sigma^2_\epsilon}{2}(\log [q^{k+1}p^{k+1}/(k+1)!]-2\log 2)\right)\\
&\in \Omega\left((k+1)\log (pq)-\log (k+1)!\right)
\end{split}
\end{equation*} 
\end{proof}

\begin{corollary}
\label{cor2}
Assume nature uniformly picks a true function $\bar{g}$ from $\mathcal{G}_{2k+1}$.  For each $g \in \mathcal{G}_{2k+1}$, define a corresponding $h(\x,y)= \frac{1}{2}(y-g(\x))^2$. The corresponding true hypothesis is $\bar{h}=\bar{h}(\x,y)= \frac{1}{2}(y-\bar{g}(\x))^2$. Let $\mathcal{H}_{2k+1}=\{ h(\x,y)= \frac{1}{2}(y-g(\x))^2, g\in\mathcal{G}_{2k+1}\}$. For any estimator $\hat{h}=\hat{h}(\x,y)= \frac{1}{2}(y-\hat{g}(\x))^2$, if $n\le (\log (q^{k+1}{p \choose k+1})-2\log 2) \sigma^2_\epsilon/2$, then $\E_{\D}[\hat{h}]-E_{\D}[\bar{h}]\ge 1$ with probability at least $\frac{1}{2}$.
\end{corollary}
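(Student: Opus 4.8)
The plan is to reduce this excess-risk statement to the hypothesis-recovery statement of Theorem \ref{thm:nec} by showing that, in this restricted regression ensemble, the population excess risk $\E_\D[\hat h]-\E_\D[\bar h]$ takes only the value $0$ (when $\hat g=\bar g$) and the value $1$ (when $\hat g\ne\bar g$). First I would unpack the data-generating model: in the regression case $t(z)=z$, so the data satisfies $y=\bar g(\x)+\epsilon$ with $\x\sim Unif[-1,1]^p$ and $\epsilon\sim N(0,\sigma_\epsilon^2)$ independent of $\x$ and with $\E[\epsilon]=0$. For $\bar h(\x,y)=\frac{1}{2}(y-\bar g(\x))^2$ this immediately gives $\E_\D[\bar h]=\frac{1}{2}\E[\epsilon^2]=\frac{1}{2}\sigma_\epsilon^2$. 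For $\hat h\in\mathcal{H}_{2k+1}$, writing $y-\hat g(\x)=(\bar g(\x)-\hat g(\x))+\epsilon$ and expanding the square, the cross term vanishes because $\epsilon$ is independent of $\x$ with mean zero; hence $\E_\D[\hat h]=\frac{1}{2}\E_\x[(\bar g(\x)-\hat g(\x))^2]+\frac{1}{2}\sigma_\epsilon^2$, so that $\E_\D[\hat h]-\E_\D[\bar h]=\frac{1}{2}\E_\x[(\bar g(\x)-\hat g(\x))^2]$.

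Next I would identify $\E_\x[(\bar g(\x)-\hat g(\x))^2]$ with the $L^2$ computation already carried out in the proof of Theorem \ref{thm:nec}: since $\x$ is uniform on $[-1,1]^p$, we have $\E_\x[(\bar g-\hat g)^2]=\langle \bar g-\hat g,\bar g-\hat g\rangle=\norm{\bar g-\hat g}^2$ for the same inner product used there, and by eq.\eqref{eqn:innerproduct} together with $\norm{g_\mathcal{A}}^2=1$ for every $g_\mathcal{A}\in\mathcal{G}_{2k+1}$ (which follows from the orthonormality of the $\phi_i$'s under the uniform measure established in that proof), this quantity equals $2$ whenever $\hat g\ne\bar g$ and $0$ whenever $\hat g=\bar g$. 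Consequently $\E_\D[\hat h]-\E_\D[\bar h]=\iverson{\hat g\ne\bar g}$, so the event $\{\E_\D[\hat h]-\E_\D[\bar h]\ge 1\}$ coincides with $\{\hat g\ne\bar g\}$. It then remains only to invoke Theorem \ref{thm:nec}, which under $n\le(\log(q^{k+1}\binom{p}{k+1})-2\log 2)\sigma_\epsilon^2/2$ gives $\P[\hat g\ne\bar g]\ge\frac{1}{2}$, and therefore $\P[\E_\D[\hat h]-\E_\D[\bar h]\ge 1]\ge\frac{1}{2}$.

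The argument has no deep obstacle; the step deserving the most care is the expectation decomposition in the first paragraph, and in particular pinning down that the squared $L^2$ distance between two distinct members of $\mathcal{G}_{2k+1}$ is \emph{exactly} $2$ (so the excess risk is bounded below by $1$, not merely by some positive constant) — this is precisely the content of eq.\eqref{eqn:innerproduct}, so it is already in hand. One should also state explicitly that $\hat g$ (equivalently $\hat h$) is taken to range over $\mathcal{G}_{2k+1}$ (resp. $\mathcal{H}_{2k+1}$), which is what makes the value of $\norm{\bar g-\hat g}^2$ deterministic given the recovery outcome; everything else is a direct appeal to Theorem \ref{thm:nec}.
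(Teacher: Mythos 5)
Your proposal is correct and follows essentially the same route as the paper: expand $\E_\D[\hat h]-\E_\D[\bar h]$ using $y=\bar g(\x)+\epsilon$, kill the cross term via independence and zero mean of $\epsilon$, identify the remainder with $\frac{1}{2}\norm{\bar g-\hat g}^2=\iverson{\hat g\ne\bar g}$ from eq.\eqref{eqn:innerproduct}, and invoke Theorem \ref{thm:nec}. Your reading of eq.\eqref{eqn:innerproduct} as giving the value $2$ exactly when $\hat g\ne\bar g$ (rather than the literal $2I(g_\mathcal{A}=g_{\mathcal{A}'})$ printed there, which is a sign slip in the paper) is the intended one, so there is nothing to fix.
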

\begin{proof}
$\bar{g}$ is the true function, so $y=\bar{g}(\x)+\epsilon$, where $\epsilon\sim N(0,\sigma_\epsilon^2)$. Recall that by Theorem 2, if $n\le (\log (q^{k+1}{p \choose k+1})-2\log 2) \sigma^2_\epsilon/2$ then $P[\bar{g} \neq \hat{g}] \geq 1/2$. Thus, assuming that $\bar{g} \neq \hat{g}$, we have
\begin{align*}
\E_{\D}[\hat{h}]-&E_{\D}[\bar{h}]=\frac{1}{2}\E_{(\x,y)\sim \D}[(y-\hat{g}(\x))^2-(y-\bar{g}(\x))^2]\\
=&\frac{1}{2}\E_{\substack{\x\sim Unif[-1,1]^p\\ \epsilon\sim N(0,\sigma^2_\epsilon)}}[(\bar{g}(\x)+\epsilon-\hat{g}(\x))^2-\epsilon^2]\\
=&\frac{1}{2}\E_{\substack{\x\sim Unif[-1,1]^p\\ \epsilon\sim N(0,\sigma^2_\epsilon)}}[(\bar{g}(\x)-\hat{g}(\x))^2+2\epsilon(\bar{g}(\x)-\hat{g}(\x))]\\
=&\frac{1}{2}\E_{\x}[(\bar{g}(\x)-\hat{g}(\x))^2]+\E_{ \epsilon} [\epsilon]*\E_{\x}[(\bar{g}(\x)-\hat{g}(\x))]\\
=&\frac{1}{2}||\bar{g}-\hat{g}||^2\\
=&\frac{1}{2}*2I(\bar{g}\neq\hat{g})\\
=&1
\end{align*}
\end{proof}

\begin{remark}
Excess risk measures how well the empirical risk minimizer performs when compared to the best candidate in the hypothesis class. On the one hand, Corollary \ref{cor1} discusses the upper bound of the excess risk, and indicates that the sufficient sample complexity is $O(k\log (pq)+\log k!)$. On the other hand, Corollary \ref{cor2} discusses the lower bound of the excess risk, and shows that the necessary sample complexity is $\Omega(k\log (pq)-\log k!)$. Especially when $k \ll pq$, both the sufficient sample complexity and necessary sample complexity are $\Theta(k\log (pq))$.
\end{remark}
\section{Greedy Search Algorithm for Regression }

In this section, we propose a greedy search algorithm
to recover a weighted labeled binary tree for regression. 
As mentioned in Section 3.2, for regression, we define $d(y,y') = \min(1, (y-y')^2/2)$. For simplicity, we assume $\mathcal{Y}=[-1,1]$, thus $d(y,y') = (y-y')^2/2$.  Consequently, we have $\mathcal{H}(f)=\{ h(z)= h(x,y)=(y-g(x))^2/2, g\in\mathcal{W}(f)\}$ for a fixed labeled binary tree $f$. The true risk and the empirical risk are defined as $\E_\D[h]=\E_{(x,y)\sim\D}[(y-g(x))^2/2]$, and $\hat{\E}_S[h]=\sum\limits_{i=1}^n(y_i-g(\x_i))^2/2$.

Based on Theorem 1 in Section 3.2, it is straightforward to
have a brute-force algorithm to traverse all possible trees in $\mathcal{F}_{2k+1}$,
and to compute the best weights for each tree. Theorem 1 could guarantee that the risk at the empirical risk minimizer is close to the minimum possible risk over all functions in $\mathcal{W}_{2k+1}$, given enough training samples. However the space of trees grows exponentially with the number of nodes, as shown in Lemma \ref{lemma1}, and therefore the brute-force algorithm is exponential-time.

After decades of work, the literature in tensor decomposition has still failed to provide polynomial-time algorithms with guarantees, for a general nonsymmetric tensor decomposition problem. In general, it has been shown that most tensor problems are NP-hard \cite{hillar2013most}. Therefore most existing literature considers a specific tensor structure like the symmetric orthogonal decomposition \cite{anandkumar2014tensor}. As shown in Figure \ref{treetensor}, we can model the tensor decomposition problem in our framework, for a fixed tree. However in our problem, we learn the tree structure. Thus, our problem is harder than tensor decomposition.

Given the above, we propose a greedy search algorithm for learning the structure of \emph{predictor functions}. A greedy approach was also taken in \cite{structure} for learning the structure of \emph{kernels}. Before we proceed, note that the uniform convergence of the empirical risk to the true risk holds for any $h \in \mathcal{H}_{2k+1}$ and therefore, it applies to the greedy algorithm output, which is an element of $\mathcal{H}_{2k+1}$.

Our algorithm begins by applying all basis functions to all input dimensions, and picking the one that minimizes $\sum_{m=1}^n(y_m-w'\phi_{i'}(x_{j'}))^2/2$ among all function indices $i' \in \{1,\dots,q\}$ and coordinates $j' \in \{1,\dots,p\}$, where $w'$ is estimated separately for each candidate option $(i',j')$. This produces a tree with a single node. After this, we repeat the following search operators over the leaves of the current tree: Any leaf $\mathcal{V}$ can be replaced with $\mathcal{V}+\mathcal{V}^{'}$, or $\mathcal{V}*\mathcal{V}^{'}$, where $\mathcal{V}^{'}=w'\phi_{i'}(x_{j'})$.

Our algorithm searches over the space of trees using a greedy search approach. At each stage, we evaluate the replacement of every leaf by either a summation or multiplication, and compute the weight for the new candidate leaf while fixing all the other weights. Then we take the search operation with the lowest score among all leaves, and adjust all weights by coordinate descent at each iteration.  (For completeness, we include our main algorithm in Appendix C.)

\normalsize

{\bf Computing the Weight.}
A main step in our main algorithm is the computation of the weight of a new candidate leaf, while fixing all the other weights. Fortunately, computing the new weight turns out to be a simple least square problem, but involves traversing the tree from the root to the candidate node being evaluated. (The corresponding algorithm can be found on Appendix C, with a concrete example to illustrate our algorithm.)

{\bf Computational Complexity. }
Next, we analyze the time complexity of our method. In iteration D, we solve $O(pqD)$ single-dimensional closed-form optimization problems: for all the $D$ tree leaves, our algorithm tries to insert a new node with either "+" or "*", all $q$ basis functions, and all $p$ dimensions of $\x$. In addition, it takes $O(nD)$ time to compute the optimal weight (in closed-form) for a specific basis function of a specific dimension of $\x$ at a specific insert position on a dataset of size $n$. Finally, it takes $O(nD)$ to adaptively update all weights at each step by coordinate descent. The computational complexity of our algorithm for $k$ iterations is thus $O(pqn(1^2+2^2+\dots+k^2)) \in O(pqnk^3)$.
This can be reduced by processing the tree leaves (or alternatively, batches of data samples) in parallel.

\section{Experiments}
In this section, we demonstrate our theorem in four simulation experiments. We use a function $g(\x)=0.3sin(3\pi x_1)cos(2\pi x_2)+0.4x_3^2-0.3x_4$, and noise standard deviation $\sigma = 0.05$. Our choice of the set of basis functions $\Phi$ include
B-spline of degree 1, Fourier basis functions: $\{\sin(i\pi x),\cos(i\pi x)\}_{i=1,\dots,\infty}$ and truncated polynomials: $\{x,x^2,x^3,(x-t)_{+}^3,t\in\R\}$, where $(x)_{+}=max(x,0)$. We designed four different experiments to demonstrate our theoretical contributions. For each setting, the generalization error is estimated by the mean of 20 repeated trials in order to show error bars at 95\% confidence level.
\paragraph{Experiment 1}
We set the dimension of the explanatory variables  $p=100$, the number of basis functions: $q=40$, and the number of iterations $k=10$. For each value of $n \in \{ 50, 100, 150, 200, 250\}$, we sampled $n$ random samples $\x_i$,  $y_i=g(\x_i)+\epsilon_i$, $i=1,\cdots,n$ for training, and $n/3$ samples for testing. In Figure \ref{fig:n}, we observe that the generalization error has a sharp decline when $n$ increases from 50 to 100, and a slower decline for higher values of $n$. This demonstrates that the generalization error $\propto \sqrt{\frac{1}{n}}$ as prescribed by Theorem \ref{thm:radunifconv}. 
\begin{figure}[!h]
\centering
	\includegraphics[scale=.35]{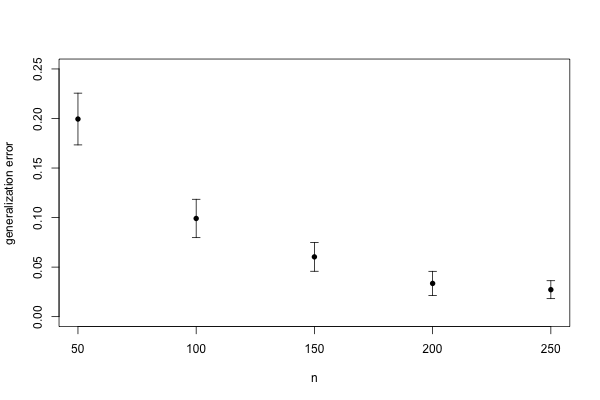}
	\caption{Generalization error vs. sample size $n$.}
	\label{fig:n}
	\end{figure}
\paragraph{Experiment 2}
We set the sample size $n=250$, the number of basis functions: $q=40$, and the number of iterations $k=10$. For each value of $p \in \{ 10, 20, 50, 100, 200\}$, we sampled 250 $p-dimensional$ random samples $\x_i$,  $y_i=g(\x_i)+\epsilon_i$, $i=1,\cdots,n$ for training, and 83 samples for testing. Figure \ref{fig:p} shows that the generalization error grows rapidly when $p\in(0,50)$, and the growth slows down as $p$ increases. This finding matches the conclusion of Theorem \ref{thm:radunifconv} that the generalization error $\propto \sqrt{\log p}$.  
\begin{figure}[!h]
\centering
	\includegraphics[scale=.35]{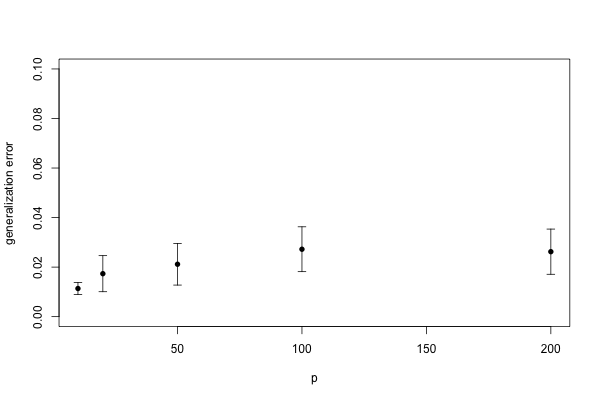}
	\caption{Generalization error vs. dimension of the explanatory variable $p$.}
	\label{fig:p}
\end{figure}
\paragraph{Experiment 3}
We set the dimension of the explanatory variables $p=100$, the number of basis functions: $q=40$, and the sample size $n=250$. For each value of the number of iterations $k \in \{ 1, 5, 10, 20\}$, we sampled 250 random samples $\x_i$,  $y_i=g(\x_i)+\epsilon_i$, $i=1,\cdots,n$ for training, and 83 samples for testing. As shown in Figure \ref{fig:k}, the generalization error grows almost linearly as $k$ increases when $k$ is small, but the growth rate decreases apparently when $k>15$.  This is consistent with the theoretical result that the generalization error $\propto \sqrt{k}$.
\begin{figure}[!h]
\centering
	\includegraphics[scale=.35]{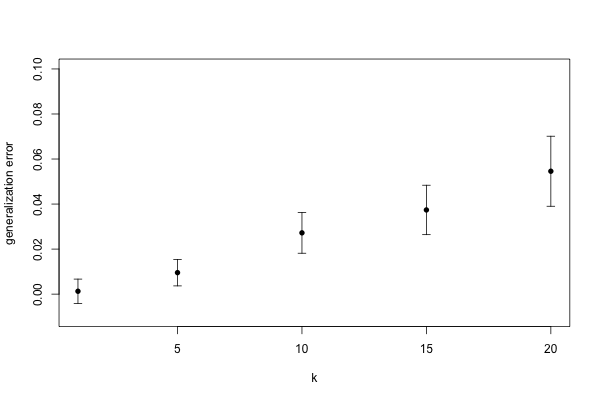}
	\caption{Generalization error vs. number of iterations $k$.}
	\label{fig:k}
\end{figure}
\paragraph{Experiment 4}
We set the dimension of the explanatory variables $p=20$, the sample size $n=250$, and the number of iterations $k=10$. For each value of $q \in \{ 10, 20, 50, 100\}$, we sampled 250 random samples $\x_i$,  $y_i=g(\x_i)+\epsilon_i$, $i=1,\cdots,n$ for training, and 83 samples for testing. Figure \ref{fig:q} indicates that the generalization error grows rapidly when $q$ is small, and the growth slows down as $q$ continue to increase. This matches the conclusion of Theorem \ref{thm:radunifconv} that the generalization error $\propto \sqrt{\log q}$.  
\begin{figure}[!h]
\centering
	\includegraphics[scale=.35]{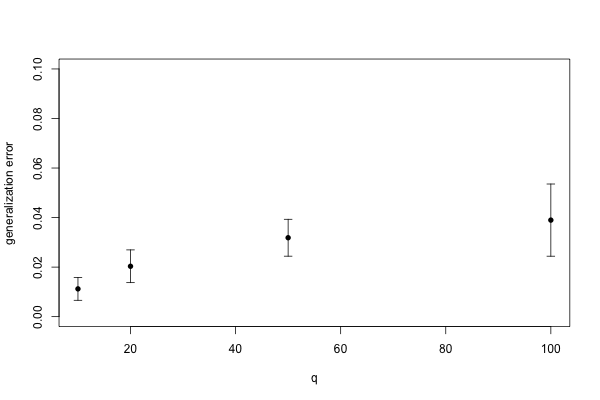}
	\caption{Generalization error vs. number of basis functions $q$.}
	\label{fig:q}
\end{figure}

Our methods are comparative to methods like \emph{Gaussian processes} for two real-world data sets, although our model sizes are much smaller. (Please see Appendix D.)

\section{Concluding Remarks}
There are several ways of extending this research. While we focused on the sample complexity for trees of predictor functions, it would be interesting to analyze trees of kernels as well, as many popular kernel structures \cite{structure} are equivalent to a labeled binary tree. Additionally, while we focused on learning trees, it would be interesting to propose methods for learning general directed acyclic graphs.

\bibliography{references} 
\clearpage
\appendix
\textbf{On the Statistical Efficiency of Compositional Nonparametric Prediction}

\section{Detailed Proofs}
\subsection{Proof for Lemma \ref{lemma0}}
\begin{proof}
We first show $||\uu||_\infty\le 1$: 

For any production of finite basis functions from $\Phi$,
 \[||\prod\limits_{1=1}^L\phi_{i_l}(x_{j_l})||_\infty\le \prod\limits_{1=1}^L||\phi_{i_l}(x_{j_l})||_\infty\le 1\]
Each component of $\uu$ is a production of finite basis functions from $\Phi$. Thus $||\uu||_\infty\le 1$.

Then we show $||\vv||_1\le ||\w||_1$ if $||\w||_1\le 1$ by induction:

$k=0$, $||v||_1=||w||_1$;

Assume that for any $k<K$ and any weighted labeled binary tree $h \in \W_{2k+1}$, $||v_h||_1\le ||w_h||_1$. For $k=K$, decompose the tree $h(\x;f,\w)\in \W_{2K+1}$ by the left subtree $h_l(\x;f_l,\w_l)=\langle \vv_l,\uu_l\rangle$ and the right subtree as $h_r(\x;f_r,\w_r)=\langle \vv_r,\uu_r\rangle$.

If the root is a "+", then $||\vv||_1=||\vv_l||_1+||\vv_r||_1\le ||\w_l||_1 + ||\w_r||_1 = ||\w||_1$.

If the root is a "*", then 
\begin{align*}
||\vv||_1 &= \sum\limits_t\sum\limits_s|v_l^tv_r^s|\\
&=\sum\limits_t|v_l^t|\sum\limits_s|v_r^s|\\
&=\sum\limits_t|v_l^t|||\vv_r||_1\\
&=||\vv_l||_1||\vv_r||_1\\
&\le ||\w_l||_1||\w_r||_1\\
&\le ||\w||_1^2\\
&\le ||\w||_1
\end{align*}
\end{proof}
\subsection{Proof for Lemma \ref{lemma1}}
\begin{proof}
Remind that $p$ is the dimension of the covariate, and $q$ is the number of basis functions. We define $\mathcal{F}^{*}_{2k+1} \subset \mathcal{F}_{2k+1}$ as the set of labeled binary trees with exactly $2k+1$ nodes. In this step, we will show that $|\mathcal{F}^{*}_{2k+1}|\le 2^k(k)!(pq)^{k+1}$.

We first show $|\mathcal{F}^{*}_{2k+1}|\le (pq)^{k+1}(k)!2^k$ for all $k=0,1,\cdots$:

$k=0$, $|\mathcal{F}^{*}_{2*0+1}|=pq \le (pq)^{0+1}(0)!2^0$;

$k=1$, $|\mathcal{F}^{*}_{2*1+1}|=2(pq)^2-2pq < (pq)^{1+1}(1)!2^1$;

Assume that $|\mathcal{F}^{*}_{2*k+1}|\le (pq)^{k+1}(k)!2^k$ for all $k<K$, then for $k=K$,
\begin{align*}
|\mathcal{F}^{*}_{2K+1}|&=2\sum\limits_{i\in \{ 1,3,\cdots,2K-1\} }|\mathcal{F}^{*}_i||\mathcal{F}^{*}_{2K-i}|\\
&\le 2\sum\limits_{i=0,\cdots,K-1} (pq)^{i+1}(i)!2^i\\
&\hspace{.25in}(pq)^{K-i-1+1}(K-i-1)!2^{K-i-1}\\
&=(pq)^{K+1}2^K\sum\limits_{i=0,\cdots,K-1} (i)!(K-i-1)!\\
&\le (pq)^{K+1}2^K\sum\limits_{i=0,\cdots,K-1} (K-1)!\\
&\le (pq)^{K+1}2^K(K)!
\end{align*}
Since for $k\ge 1$, we have $2^{k-1}=\sum\limits_{i=0}^{k-1}\frac{(k-1)!}{i!(k-1-i)!}$, or equivalently, 
$\frac{2^{k-1}}{(k-1)!}=\sum\limits_{i=0}^{k-1}\frac{1}{i!(k-1-i)!}$, 
and since $1/x$ is concave, by Jensen's inequality, we have that $\frac{2^{k-1}}{(k)!}=\sum\limits_{i=0}^{k-1}\frac{1}{k}\frac{1}{i!(k-1-i)!} \le \frac{1}{\sum\limits_{i=0}^{k-1} i!(k-1-i)!/k}$. Thus $\sum\limits_{i=0}^{k-1} i!(k-1-i)! \le k\frac{(k)!}{2^{k-1}}$ for $k \ge 1$.
Except for the root node, a labeled binary tree consists of the left subtree and the right subtree. Thus
\begingroup
\allowdisplaybreaks
\begin{align*}
|\mathcal{F}^{*}_{2k+1}|&=2\sum\limits_{i\in \{ 1,3,\cdots,2k-1\} }|\mathcal{F}^{*}_i||\mathcal{F}^{*}_{2k-i}|\\
&\le(pq)^{k+1}2^k\sum\limits_{i=0,\cdots,k-1} (i)!(k-i-1)!\\
&\le (pq)^{k+1}2^kk\frac{(k)!}{2^{k-1}}\\
&=2k(k)!(pq)^{k+1}
\end{align*}
\endgroup
Finally, we will prove that $|\mathcal{F}_{2k+1}|\le 4k(k)!(pq)^{k+1}$. 
\begin{equation*}
\begin{split}
|\mathcal{F}_{2k+1}|& = \sum\limits_{i=0}^k|\mathcal{F}^{*}_{2i+1}|\\
& \hspace{-.2in}\le \sum\limits_{i=1}^{k-1}2i(i)!(pq)^{i+1}+pq+
2k(k)!(pq)^{k+1}\\
& \hspace{-.2in}\le k*2(k-1)(k-1)!(pq)^{k-1+1}+2k(k)!(pq)^{k+1}\\
& \hspace{-.2in}\le 4k(k)!(pq)^{k+1}
\end{split}
\end{equation*}
\end{proof}
\subsection{Proof for Lemma \ref{lemmadecomposition}}
\begin{proof}
Define $M_{2k+1}^{*}=\max\limits_{f\in \F^{*}_{2k+1}}M_f$. Since $M^{*}_{2k+1}=M_{2k+1}$, it is equivalent to show $M_{2k+1}^{*}<(1.45)^{k+1}$. We will prove the lemma by induction.

$k=0$, $M^{*}_{2*0+1}=1<(1.45)^1$;

$k=1$, $M^{*}_{2*1+1}=max(1,1+1)=2<(1.45)^2$;

$k=2$, $M^{*}_{2*2+1}=3<(1.45)^3$;

Assume that $M_{2k+1}^{*}<(1.45)^{k+1}$ for all $k<K$, where $K\ge 3$, then for $k=K$,
\begin{equation*}
\begin{split}
M_{2k+1}^{*}&=\max\limits_{i\in \{ 1,3,\cdots,2K-1\} }[\max(M^{*}_iM^{*}_{2K-i},M^{*}_i+M^{*}_{2K-i})]\\
&<\max\limits_{i\in \{ 1,3,\cdots,2K-1\} }[\max(1.45^{\frac{i-1}{2}+1}1.45^{\frac{2K-i-1}{2}+1},\\
&\hspace{1.37in}1.45^{\frac{i-1}{2}+1}+1.45^{\frac{2K-i-1}{2}+1})]\\
&=(1.45)^{K+1}
\end{split}
\end{equation*}
\end{proof}
\section{Technical Lemma}
The following technical lemma regarding the McDiarmid's condition for the supremum can be found in \cite{Bartlett02}.
\begin{lemma} \label{lem:varphilipschitz}
Let $z$ be a random variable of support $\Z=(\mathbb{R}^p,\mathcal{Y})$ and distribution $\D$.
Let ${S = \{z_1 \dots z_n\}}$ be a dataset of $n$ samples.
Let $\HH$ be a hypothesis class satisfying ${\HH \subseteq \{h \mid h : \Z \to [0,1]\}}$.
The function:
\begin{align} \label{eq:varphi}
\varphi(S) = \sup_{h \in \HH}{\left( \E_\D[h] - \Eh_S[h] \right)}
\end{align}
\noindent satisfies the following condition:
\begin{align*}
{\rm\ } |\varphi(z_1,\dots,z_i,\dots,z_n) - \varphi(z_1,\dots,\zp_i,\dots,z_n)| \leq 1/n\\
(\forall i, \forall z_1 \dots z_n, \zp_i \in \Z)
\end{align*}
\end{lemma}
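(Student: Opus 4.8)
\textbf{Proof proposal for Lemma~\ref{lem:varphilipschitz}.}

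The plan is to prove the bounded-differences (McDiarmid) condition directly from the definition of $\varphi$ by a standard perturbation argument. Fix an index $i$, fix samples $z_1,\dots,z_n$ and an alternative point $\zp_i \in \Z$, and write $S = (z_1,\dots,z_i,\dots,z_n)$ and $S' = (z_1,\dots,\zp_i,\dots,z_n)$. The key observation is that the true risk $\E_\D[h]$ does not depend on the sample at all, so the only thing that changes when we swap $z_i$ for $\zp_i$ is the empirical average $\Eh_S[h] = \frac1n\sum_{j=1}^n h(z_j)$, and it changes by exactly $\frac1n\big(h(\zp_i) - h(z_i)\big)$ in the $h$-th coordinate. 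Since $\HH \subseteq \{h : \Z \to [0,1]\}$, we have $|h(\zp_i) - h(z_i)| \le 1$ for every $h \in \HH$, hence $|\Eh_S[h] - \Eh_{S'}[h]| \le 1/n$ uniformly over $\HH$.

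Next I would promote this uniform pointwise bound to a bound on the suprema. For every $h \in \HH$,
\begin{align*}
\E_\D[h] - \Eh_S[h] &= \big(\E_\D[h] - \Eh_{S'}[h]\big) + \big(\Eh_{S'}[h] - \Eh_S[h]\big)\\
&\le \big(\E_\D[h] - \Eh_{S'}[h]\big) + \tfrac1n\\
&\le \varphi(S') + \tfrac1n.
\end{align*}
Taking the supremum over $h \in \HH$ on the left gives $\varphi(S) \le \varphi(S') + 1/n$. By symmetry (exchanging the roles of $S$ and $S'$, which is legitimate since $\zp_i$ is an arbitrary element of $\Z$), we also get $\varphi(S') \le \varphi(S) + 1/n$, and combining the two inequalities yields $|\varphi(S) - \varphi(S')| \le 1/n$, which is exactly the claimed condition.

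There is no serious obstacle here; the only point requiring a little care is the handling of the supremum, namely that an inequality $a(h) \le b(h) + c$ holding for every $h$ does \emph{not} in general let one bound $\sup_h a(h)$ by $\sup_h b(h) + c$ unless one is slightly careful — but in fact it does, since $a(h) \le b(h) + c \le \sup_{h'} b(h') + c$ for each fixed $h$, and then taking $\sup_h$ of the left side preserves the inequality because the right side is a constant independent of $h$. One should also note that the suprema may a priori be $+\infty$; this is harmless because $\HH$ maps into $[0,1]$, so $\E_\D[h] - \Eh_S[h] \in [-1,1]$ and $\varphi$ is in fact bounded, so all the arithmetic above is well defined. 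This argument does not use any structure of $\HH$ beyond boundedness, which is why it applies verbatim to $\HH(f)$ in the proof of Theorem~\ref{thm:radunifconv}.
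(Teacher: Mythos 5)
Your proof is correct and is precisely the standard bounded-differences argument for the supremum of an empirical process; the paper itself gives no proof and simply cites Bartlett--Mendelson, where this same perturbation-plus-supremum argument appears. Your handling of the supremum step and the boundedness caveat are both sound, so nothing is missing.
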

\section{Detailed Greedy Search Algorithm and Illustration Example}
For completeness, we present our main greedy search algorithm in detail in Algorithm \ref{alg:main}, as well as the algorithm to compute the node weights in Algorithm \ref{alg:example}.
For simplicity, we assume the covariate $\bm{x}_m\in [0,1]^p$. As for the set of basis functions $\Phi$, piecewise linear functions, Fourier basis functions, or truncated polynomials   could be good choices in practice. We first define $f_{\w}(\x)$ as the output of tree structure $f$ with weights $\w$ for input $\x$. For instance, let $f$ be the tree structure of Figure \ref{tree1}. With a corresponding weight for each leaf, $f_{\w}$ can be visualized as in Figure \ref{wtree}. Thus $f_{\w}(\x)=(w_1\phi_1(x_2)+w_2\phi_3(x_1))*(w_3\phi_3(x_2)+w_4\phi_1(x_3))$ in this specific case. The loss function is defined as $L(f_{\w};\x,\bm{y})=\sum\limits_{m=1}^n(y_m-\hat{y}_m)^2/2$, where $\hat{y}_m=f_{\w}(\x_m)$. We could explore the interaction structure $f$ by adding and multiplying a basis function on a single dimension of covariate $\bm{x}$. 
\begin{algorithm}
   \caption{Greedy search algorithm}
   \label{alg:main}
\begin{algorithmic}

   \STATE {\bfseries Input:} $\bm{X}=(\x_1,\dots,\x_n)'\in \R^{n\times p}$: n data points
   
    \hspace{.44in}$\bm{y}=(y_1,\dots,y_n)\in \R^n$: n observations

   \hspace{.44in}$\Phi$: a set of $q$ basis functions, $k$: the number of iterations
   
   \STATE Initialize the tree  $f_{\w}=w_0+w_1\phi_{i_1}(x_{j_1})$, where
   
$(w_0,w_1,i_1,j_1)=\argmin\limits_{(w_0',w',i',j')}\sum_{m=1}^n(y_m-w_0'-w'\phi_{i'}(x_{j'}))^2$
    
   \FOR{$iters=1$ {\bfseries to} $k-1$}
    \FOR{$node$ in $f_{\w}.leaves$}
     \STATE $path=path(f_{\w}.root,node)$
     \FOR{$m=1$ {\bfseries to} $n$}
     \STATE Algorithm \ref{alg:example} with input 
     $(\x_m,f_{\w},path)$:
     
      $b_m=b(\x_m)$, $k_m=k(\x_m)$
     \STATE $c_m=node(\x_m)$ (If $node$ is $w\phi_i(x_j)$, then $node(\x_m)=w\phi_i(x_{mj})$)
     \ENDFOR
     
    \STATE 
    $(w_0,w_+,i_+,j_+)=\argmin\limits_{(w_0',w'\le 1,i',j')}\sum\limits_{m=1}^n(y_m-w_0'-b_m-k_m(c_m+w'\phi_{i'}(x_{mj'})))^2$, and define $r_+$ as the corresponding minimum value attained. 

    $(w_0,w_*,i_*,j_*)=\argmin\limits_{(w_0',w'\le 1,i',j')}\sum\limits_{m=1}^n(y_m-w_0'-b_m-k_m(c_mw'\phi_{i'}(x_{mj'})))^2$, and define $r_*$ as the corresponding minimum value attained.  
    
    \IF{$r_+<r_*$}
   \STATE Insert the new leaf $w_+\phi_{i_+}(x_{j_+})$ at $node$ with "+", and call the new tree $f_{\w}^{node}$
   \STATE $r_{node}=r_+$
    \ELSE
    \STATE Insert the new leaf$(w_*\phi_{i_*}(x_{j_*})$ at $node)$ with "*", and call the new tree $f_{\w}^{node}$
    \STATE $r_{node}=r_*$
    \ENDIF
    \STATE Adjust all weights
    \ENDFOR
    \IF{$r_{node}<r_{best}$}
    \STATE $r_{BEST}=r_{node}$, $f_{\w}^{best}=f_{\w}^{node}$
    \ENDIF
   \STATE Update $f_{\w}$ with $f_{\w}^{best}$
   \ENDFOR
    \STATE {\bfseries Output:} $f_{\w}$ 
\end{algorithmic}
\end{algorithm}

  \begin{algorithm}
   \caption{Compute $b(\x_m)$ and $k(\x_m)$}
   \label{alg:example}
\begin{algorithmic}
   \STATE {\bfseries Input:} $\x_m\in \R^p$: data point
   
   \hspace{.44in}$f_{\w}$: current weighted labeled tree

   \hspace{.44in}$path$: path from the root to the insert position
   
   \STATE Initialize $root$ as the root of $f_{\w}$, k=1, b=0
   \WHILE{$path$ is not empty}
   \STATE Define $subtree$ as the $!path[1]$ subtree of $root$
   \STATE $val=evaluate(subtree,\x_m)$, where $evaluate$ gives the output of the weighted labeled tree $subtree$ with input $\x_m$
   \IF{$root="+"$} 
   \STATE $b=b+val*k$
   \ELSIF{$root="*"$} \STATE $k=val*k$ 
   \ENDIF \STATE Update $root$ as its $path[1]$ child
   \STATE Remove the first element of $path$
   \ENDWHILE
   \STATE {\bfseries Output:} $(b, k)$ 
\end{algorithmic}
\end{algorithm}

\paragraph{An example to illustrate Algorithm \ref{alg:example}.}
Take Figure \ref{algorithmtree} for example, and assume we are trying to insert a new leaf $wx_4^3$ with either a "+" or "*" at the Node E, that is to replace the weighted leaf $-.05x_1$ with either $-.05x_1+wx_4^3$ or $-.05x_1*wx_4^3$. With an unknown weight $w$ and an unknown intercept $w_0$, the output $\hat{y}_m$ for the input $\x_m$ of the new tree is 
\begin{align*}
&w_0+[.1x_{m2}^2-.05x_{m1}+wx_{m4}^3](.3\sin(\pi x_{m2})+.02x_{m3})\\
&\triangleq w_0+b(\x_m)+k(\x_m)(wx_{m4}^3-.05x_{m1})
\end{align*}
for "+", and 
\begin{align*}
&w_0+[.1x_{m2}^2+wx_{m4}^3(-.05)x_{m1}](.3\sin(\pi x_{m2})+.02x_{m3})\\
&=w_0+b(\x_m)+k(\x_m)(-.05wx_{m4}^3x_{m1})
\end{align*}
 for "*". 
 \begin{figure}[!h]
\begin{center}
\centerline{\includegraphics[scale=.4]{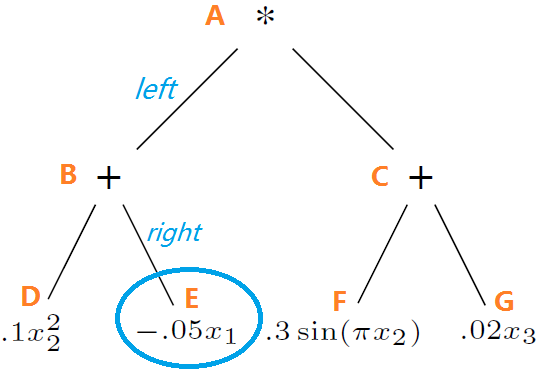}}
\caption{Inserting a new leaf at Node E.}
\label{algorithmtree}
\end{center}
\end{figure} 
 
 Note that $b(\x_m)$ and $k(\x_m)$ are constant with respect to the to-be-defined weight, and thus, the optimization problems $\min\limits_w {\sum\limits_{m=1}^n(y_m-w_0-b(\x_m)-k(\x_m)(wx_{m4}^3-.05x_{m1}))^2}$ and $\min\limits_w {\sum\limits_{m=1}^n(y_m-w_0-b(\x_m)-k(\x_m)(-.05wx_{m4}^3x_{m1}))^2}$ are both least square problems. We add a constraint $|w|\le 1$ according to the assumption of Theorem \ref{thm:radunifconv}, to ensure the uniform convergence. However, it is not straightforward to compute $b(\x_m)$ and $k(\x_m)$. As shown in Algorithm \ref{alg:example}, we compute the value of $b(\x_m)$ and $k(\x_m)$ iteratively along the path from the root to the insert position.
 We continue with our current setting, and move on to compute $b(\x_m)$ and $k(\x_m)$ according to Algorithm \ref{alg:example}, assuming $\x_m=(1,1,1)$. 
 \begingroup
\allowdisplaybreaks
 \begin{enumerate}
 \item Input: $\x_m=(1,1,1)$, $f_{\w}$ is the tree in Figure \ref{algorithmtree}, $path=(left,right)$
 \item Initialize: $root=$Node A, $k=1$,$b=0$
 \item In a first iteration $path[1]=left$, so define $subtree$ as the $right=!left$ subtree of $root$(consisting of Nodes C, F, G), 
 
 $val_m=evaluate(subtree,\x_m)=.3\sin (\pi x_{m2})+.02x_{m3}=.02$
 \item Since $root="*"$, $k=val_m*k=.02$
 \item Update $root$ as its left child: $root=$Node B, $path=(right)$ after removing the first element of path 
 \item In a second iteration $path[1]=right$, so update $subtree$ as the $left=!right$ subtree of $root$ (consisting of Node D only)
 
 $val_m=evaluate(subtree,\x_m)=.1x_{m2}^2=.1$
 \item Since $root="+"$, $b=b+val_m*k=.002$
 \item Update $root$ as its right child, $path=()$ after removing the first element of path
 \item Stop the iterations since $path$ is empty
 \item Return $(b(\x_m)=.002,k(\x_m)=.02)$  
 \end{enumerate}
\endgroup
\section{Real World Experiments}
\paragraph{Airline Delays.}
For real-world experiments, we evaluate our algorithm on the US flight dataset. We use a subset of the data with flight arrival and departure times for commercial flights in 2008. The flight delay is the response variable, which is predicted by using the following variables: the age of the aircraft, distance that needs to be covered, airtime, departure time, arrival time, day of the week, day of the month, and month. We randomly select 800,000 datapoints, using a random subset of 700,000 samples to train the model and 100,000 to test it. Although our method uses only $k=10$ (i.e., $2k+1=21$ nodes, or $k+1=11$ functions of features), we obtain a test RMSE of 34.89. For comparison, the authors in \cite{hensman2013gaussian} also randomly selected 800,000 samples (700,000 for training, 100,000 for testing) and obtained an RMSE between 32.6 and 33.5 with 1200 iterations on a \emph{Gaussian processes} approach.
In general, Gaussian processes predict the output by memorization of the 700,000 training points. Our tree depends only on evaluating $k+1=11$ functions of features. When predicting, our tree does not need to remember the training set.

\paragraph{World Weather.}
The world weather dataset contains monthly measurements of temperature, precipitation, vapor, cloud cover, wet days and frost days from Jan 1990 to Dec 2002 (156 months) on a $5\times 5$ degree grid that covers the entire world. The dataset is publicly available at http://www.cru.uea.ac.uk/. The response variable is temperature. We use 19,000 samples for training, 8000 samples for testing, and run 30 iterations. Although our method uses only $k=30$ (i.e., $2k+1=61$ nodes, or $k+1=31$ functions of features), we obtain a test RMSE of 1.319. Gaussian processes obtained a test RMSE of 1.23. Since the standard deviation of the output variable is 16.98, both our method and Gaussian processes obtain a coefficient of determination of 0.99.
\end{document}